\definecolor{maincolor1}{RGB}{255, 107, 87} % FF6B57
\definecolor{maincolor2}{RGB}{55, 205, 87} % FFCD57
\definecolor{maincolor3}{RGB}{200, 90, 243} % C85AF3
\definecolor{lightcolor1}{RGB}{255, 166, 154} % FFA69A
\definecolor{lightcolor2}{RGB}{255, 225, 154} % FFE19A
\definecolor{lightcolor3}{RGB}{220, 153, 246} % DC99F6
\definecolor{darkcolor1}{RGB}{255, 51, 25} % FF3319
\definecolor{darkcolor2}{RGB}{255, 186, 25} % FFBA19
\definecolor{darkcolor3}{RGB}{183, 33, 242} % B721F2
\definecolor{linkcolor}{RGB}{0, 127, 255}
\definecolor{algocommentcolor}{RGB}{120, 120, 120}
\definecolor{coral}{HTML}{F2545B}
\definecolor{lightpurple}{RGB}{168, 141, 201}
\definecolor{apricot}{RGB}{251, 206, 177}
\definecolor{lightgrey}{RGB}{235, 235, 235} % B721F2
\newcommand*{\bda}[1]{}
\newcommand*{\karen}[1]{}
\newcommand*{\marton}[1]{}
\newcommand*{\buu}[1]{}
\newcommand{\cblock}[3]{
  \hspace{-1.5mm}
  \begin{tikzpicture}[node/.style={square, minimum size=10mm, thick, line width=0pt}]
    \node[fill={rgb,255:red,#1;green,#2;blue,#3}] () [] {};
  \end{tikzpicture}%
}
\theoremstyle{plain}
\newtheorem{proposition}{Proposition}
\newtheorem{lemma}{Lemma}
\newtheorem{corollary}{Corollary}
\theoremstyle{definition}
\newtheorem{definition}{Definition}
\newtheorem{remark}{Remark}
\newcommand{\fillme}{\colorbox{yellow}{\textcolor{red}{$\langle${\tt FILL-ME}$\rangle$}}}
\newcommand{\cmark}{\ding{51}}%
\newcommand{\xmark}{\ding{55}}%
\newenvironment{code}{\captionsetup{type=listing, position=top, skip=0pt}\noindent\rule{\linewidth}{0.5pt}\captionsetup{belowskip=0pt}}{}
\title{Exact Byte-Level Probabilities\\from Tokenized Language Models\\for FIM-Tasks and Model Ensembles}
\author{
Buu Phan$^{1*}$,~~Brandon Amos$^{2}$,~~Itai Gat$^{2}$,~~Marton Havasi$^{2}$,~~Matthew Muckley$^2$,~~Karen Ullrich$^2$\\ 
$^1$University of Toronto, $^2$Meta AI\\ 
\texttt{truong.phan@mail.utoronto.ca}\\
\texttt{\{bda, itaigat, marton, mmuckley, karenu\}@meta.com}
}
\def\customfootnotetext#1#2{{%
  \let\thefootnote\relax
  \footnotetext[#1]{#2}}}
\begin{document}
\customfootnotetext{1}{\textsuperscript{*}Work done during internship at Meta AI.}

\maketitle

\begin{abstract}
%. This bias is associated with the widely observe phenomenon, especially in code completion task, where language models struggles to generate meaningingful content with prompt ends at the middle of a word.

%Modern autoregressive language models widely employ tokenization, a sequence compression process, for efficiency and performance purposes. 
%Within this framework, models are trained using cross-entropy loss to predict subsequent tokens from the encoded input string. \textcolor{blue}{Restructure this, tokenization describtion not good.} %This work shows while any autoregressitve models may optimally capture the source distribution, tokenization introduces sampling bias that prevents models from outputing certain desireable tokens. \textcolor{blue}{Restructure thisnot good information.} 
%cross entropy training. Tokenization lossless compression - preprocessing step. Tokenization bias (describe this).

%Investigate two (token vs byte) langauge models bytes same. but not in predictive. We discover this. Call this tokenization bias.

%Tokenization problems.

%Assump reader knows what tokenization is.
%Tokenization  end-to-end. mysterious impact on LM not well understood. Tokenization wants to be there due to longtext. We want to understand byte-level equivalence of trained tokenized. 
Tokenization is associated with many poorly understood shortcomings in language models (LMs), yet remains an important component for long sequence scaling purposes. This work studies  how tokenization impacts  model performance by analyzing and comparing the stochastic behavior of tokenized models with their byte-level, or token-free, counterparts. We discover that, even when the two models are statistically equivalent, their predictive distributions over the next byte can be substantially different, a phenomenon we term as ``tokenization bias''. To fully characterize this phenomenon, we  introduce the Byte-Token Representation Lemma, a framework that establishes a mapping between the learned token distribution and its equivalent byte-level distribution.  From this result, we develop a next-byte sampling algorithm  that eliminates tokenization bias without requiring further training or optimization. In other words, this enables zero-shot conversion of tokenized LMs into statistically equivalent token-free ones. We demonstrate its broad applicability with two use cases: fill-in-the-middle (FIM) tasks and model ensembles. In FIM tasks where input prompts may terminate mid-token, leading to out-of-distribution tokenization, our method mitigates performance degradation and achieves 18\% improvement in FIM coding benchmarks, while consistently outperforming the standard token healing fix. For model ensembles where each model employs a distinct vocabulary, our approach enables seamless integration, resulting in improved performance up to 3.7\% over individual models across various standard baselines in reasoning, knowledge, and coding. Code is available at: \hyperref[https://github.com/facebookresearch/Exact-Byte-Level-Probabilities-from-Tokenized-LMs]{\textcolor{blue}{https://github.com/facebookresearch/Exact-Byte-Level-Probabilities-from-Tokenized-LMs}}.
\end{abstract}

\vspace{-5pt}\section{Introduction}\vspace{-5pt}

% Tokenization is associated with many poorly understood shortcomings in language models (LMs), yet remains an important component for long sequence scaling purposes.
Transformers form the backbone of all widely-used state-of-the-art language models (LMs) such as GPTs \citep{brown2020language}, Llama \citep{touvron2023llama}, ans Mistral \citep{jiang2023mistral7b}. A common pre-processing step in these models is tokenization, a method that shortens the input sequence by mapping multiple bytes, i.e. characters into discrete tokens, i.e. words or subwords, from a fixed vocabulary \footnote{In the context of this study, we use the terms “character” and “byte” interchangeably to refer to an element from a subset of the tokenization vocabulary. This subset is somewhat flexible. Precision is only important in the experiment section where we define the subset to be all utf-8 bytes.}. 
%Byte-Pair Encoding (BPE), the most widely used tokenization technique, acts as a form of lossless compression.
Efforts to bypass tokenization have shown limited empirical success \citep{yu2024megabyte,limisiewicz2024myte}, suggesting tokenization is critical to the performance of large language models (LLMs). 
It has been speculated that the reason for the performance gap between tokenized and byte-level models is due to the reduction of input tokens, allowing models to handle longer contexts at less compute \citep{zouhar2023tokenization,goldman2024unpacking}.
Recent work by \cite{rajaraman2024toward} provides an additional explanation: Even for unlimited data and compute
 \footnote{And assuming the data source can be approximated as a $k^{th}$ order Markov chain.}, tokenized models can achieve better cross-entropy loss than untokenized ones, resulting in superior performance.

% This work studies tokenization impact on model performance by analyzing and comparing the stochastic behavior of tokenized models with their byte-level, or token-free, counterparts. 
To contribute to the understanding of the impact of tokenization, we demonstrate that for any tokenized LM, there exists a statistically equivalent byte-level process. Despite this fact, we find a surprising discrepancy in their predictive behaviors—particularly in next-token and next-byte predictions, where there can be significant differences.
We hence introduce the concept of \textit{tokenization bias} to describe the discrepancy between the predictive distributions of the tokenized model and the byte-level equivalent. 
Towards \textit{mapping tokenized predictions to byte-level predictions}, we present the Byte-Token Representation Lemma (Section~\ref{btr_subsection}). 
This result enables both byte-level predictions and an algorithmic correction of tokenization bias in any trained LM without the need for additional training or optimization, allowing us to\textit{ sample bias-free next-bytes from any tokenized LM}.

%\begin{figure}[t]
%\vskip 0.2in
%\vspace{-15pt}
%\begin{center}
%\centerline{\includegraphics[width=0.4\columnwidth]{}}
%\vspace{-5pt}
%\captionsetup{font=footnotesize}
%\caption{When prompts end mid-token such as in this code completion example, tokenized LMs can encounter tokenization bias: Here the correct solution $\color{red}{\texttt{a}}$ has zero probability of being chosen. Our method does not suffer from this problem and hence can predict the correct token, while relying on the exact same model.}
%\vspace{-20pt}
%\label{fig:tokenhealing}
%\end{center}
%\end{figure}

\begin{figure}[t]
%\vskip 0.2in
\vspace{-30pt}
\begin{center}
\centerline{\includegraphics[width=0.99\columnwidth]{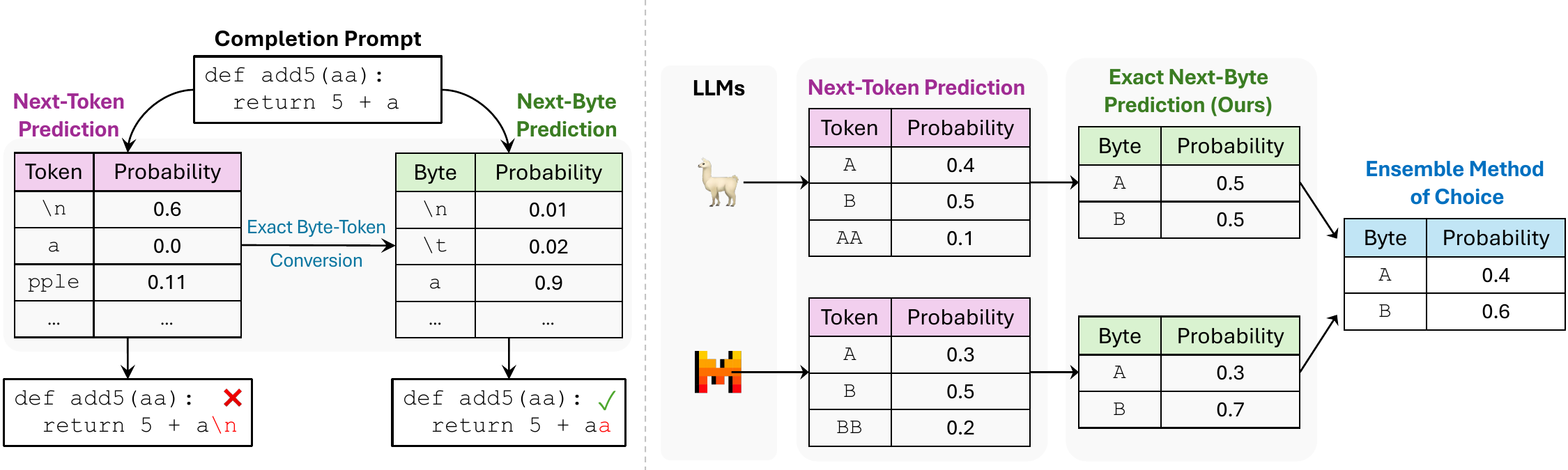}}
\vspace{-7pt}
%\captionsetup{font=footnotesize}
\caption{\textbf{Left:} Tokenized LMs can experience tokenization bias when prompts end mid-token, as in this code completion example. This means that the correct solution, $\color{red}{\texttt{a}}$ has zero probability of being chosen. Our method avoids this problem and can predict the correct token while using the same model. \textbf{Right:} Our method maps next-token predictions of arbitrary tokenized LMs to statistically equivalent next-byte predictions (see Section \ref{btr_subsection} for details). This enables any model ensemble strategy, such as averaging or mixture of experts.}
\vspace{-25pt}
\label{fig:intro}
\end{center}
\vspace{0pt}
\end{figure}

Since tokenization bias limits certain token combinations and impairs the model's ability to sample specific tokens, it is especially detrimental for tasks such as coding, where precise completion is necessary. We focus on fill-in-the-middle (FIM) tasks, where tokenization bias poses significant challenges, particularly when a prompt ends mid-token. Previous research \citep{dagan2024getting} highlights this issue, showing that language models struggle to generate the correct completion in such cases. Our theory predicts this phenomenon precisely, and the failure example shown in Figure \ref{fig:intro} (Left) can be reproduced with any open-weight model to date. Our experiments show that our next-byte prediction method outperforms tokenized models with the same cross-entropy by 18\%, and is able to corrects cases where specialized fixes, such as token healing/alignment \citep{dagan2024getting,roziere2023code, athiwaratkun2024token}, consistently fails to backup.

A byproduct of our byte-level prediction algorithm is that it enables ensembling of arbitrary LMs. Ordinarily, one cannot average the predictions of LMs that do not share a vocabulary. By mapping their predictions  and conditioning domains to byte-space, one can easily aggregate predictions from multiple models and leverage the benefits of ensembling. Combining multiple models is advantageous because ensembles are generally more accurate and robust than any individual member \citep{hastie2009elements}. We confirm this empirically: ensembling  LMs with our byte-level predictions outperform individual models in many cases. 
Overall, the key contributions of this work are:

\begin{enumerate}
\setlength{\itemsep}{5pt} % Removes space between items
\setlength{\parskip}{0pt} % Removes space between paragraphs
\setlength{\topsep}{-15pt} % Reduces space before the list
\vspace{-5pt}
\item We convert tokenized LMs into statistically equivalent token-free models, and demonstrate that their predictive distributions differ. We define this discrepancy as tokenization bias.
\item  We introduce a method that enables next-byte prediction for any tokenized language model to fully mitigate tokenization bias. This algorithm is applied at inference time and carries an O(1) computational cost in terms of model runs.
\item We present strong empirical evaluations on FIM benchmarks (18\% improvement) and model ensemble tasks (up to 3.7\% improvement), further demonstrating the effectiveness of our approach.
\end{enumerate}

\section{Notations and Setup }\label{tok_setup}\vspace{-6pt} 
%\karen{maybe this is actually more like setup or framework and assumptions}
%\bda{The notation has already been used in the intro. We could move this all to a prelims/bg section, and also move the uses from the intro in there too.}
\subsection{String and Byte-Level Language Models}\vspace{-4pt}
We denote the alphabet set as $\mathcal{A}$ and its element character, or byte, as $x$. By default and consistent with real-world data, unless otherwise stated, we include the end of string \verb!<EOS>! byte in $\mathcal{A}$. For any (finite) string  $s$, we denote its substring from location $i$ to $j$ as $x^j_i\vcentcolon={x_ix_{i+1}..x_j}$. We view any byte-level LM as a discrete stochastic process that defines the autoregressive probability $P(x_{n+1}|x^n_1)$ for all $n$. This defines an infinite (stochastic) sequence $\mathbf{x}=x_1x_2...$ where $\mathbf{x} \in \mathcal{X}$. With the existence of \verb!<EOS>!, we have $P(x_{n+1}|x^n_1) = 1.0$ for $x_{n+1}=x_n=$\verb!<EOS>! and $P(x_{n+1}=\verb!<EOS>!|x^n_1) \neq 0.0$ for any $x^n_1$. Effectively, $\mathbf{x}$  have finite length and the set $\mathcal{X}$ is countably infinite. We refer to $\mathbf{x}$ as a byte sequence instead of a string, which has a finite length and does not necessarily end with \verb!<EOS>!. 

%We define each string $s \in \mathcal{S}$ as an infinite sequence that consists of bytes within the alphabet $\mathcal{A}$. By default and consistent with real-world data, unless otherwise stated, we include the end of string \verb!<EOS>! byte in $\mathcal{A}$. Furthermore, bytes in $s$ occurring after \verb!<EOS>! must also be \verb!<EOS>! byte and the probabilty of having the next byte as \verb!<EOS>! is always greater than zero.  This means effectively, all strings are finite and the set $\mathcal{S}$ is countably infinite.

Given $x^n_1$, the function $\mathrm{prefix}(.)$ returns all possible prefixes of $x^n_1$, i.e. $\mathrm{prefix}(x^n_1) = \{x^1_1, x^2_1, x^3_1, ..., x^n_1\}$. We denote the concatenation operation as  $\mathrm{concat}(.)$, i.e. $\mathrm{concat}(x^{n_1}_1,y^{n_2}_1)=x_1...x_{n_1}y_1...y_{n_2}$. We define  $\mathcal{X}(x^n_1)$ as the set  of all byte sequences $\mathbf{x}$ with prefix $x^n_1$, i.e.
\begin{equation*}
  \mathcal{X}(x^n_1)=\{\mathbf{x}|x^n_1\in \mathrm{prefix}(\mathbf{x})\},
\end{equation*}
which corresponds to the byte-level probability of obtaining a sequence with prefix $x^n_1$, or $P(x^n_1)$. %We view any byte-level LM  as an autoregresive process that defines the probability $P(x_{n+1}|x^n_1)$ for all $n$. Since all strings $s$ are effectively finite, we have $P(x_{n+1}|x^n_1) = 1.0$ for $x_{n+1}=x_n=$\verb!<EOS>! and $P(x_{n+1}=\verb!<EOS>!|x^n_1) \neq 0.0$ for any $x^n_1$. 

\subsection{Tokenized Language Models}
We consider two commonly used tokenization algorithms, namely Byte-Pair Encoding (BPE) (see Algorithm \ref{algo:BPE} in Appendix \ref{proof_invalid_token}) and Maximum Prefix Encoding (MPE). 
Tokens are elements within a vocabulary $\mathcal{V}$, where $\mathcal{A} \subseteq \mathcal{V}$, an individual token is denoted as $t \in \mathcal{V}$. We also assume \verb!<EOS>! is not a part of any token but itself. An encoding consisting of $k$ tokens of a string $x^n_1$ is $t^k_1=\mathrm{encode}(x^n_1)$. Conversely, a decoding of $t^k_1$ is a string denoted as $x_1^n= \mathrm{decode}(t^k_1)$.  Encodings with an unspecified number of tokens are denoted as $\vec{t}$ and $\vec{t}_{[i:j]}$ are the elements from $i$ to $j$ of $\vec{t}$. Note that BPE and MPE are deterministic, i.e. each string $s$ and sequence $\mathbf{x}$ correspond to a unique encoding. We define $\mathcal{X}(t^k_1)$ as a set consisting of all sequences whose encodings start with $t^k_1$, i.e.
\begin{equation*}
 \mathcal{X}(t^k_1) =\{\mathbf{x}|t^k_1 = \mathrm{encode}(\mathbf{x})^k_1\},    
\end{equation*}
which corresponds to the token-level probability $P(t^k_1)$, i.e the probability of obtaining a sequence whose encoding starts with $t^k_1$. We also view any tokenized LM as an autoregressive process that defines $P(t_{k+1}|t^k_1)$ and refer to an infinite token sequence as $\mathbf{t}=t_1t_2...$ and $\mathbf{t} \in \mathcal{T}$. Similar as the byte-level process, with the existence of  \verb!<EOS>!, $\mathbf{t}$ is effectively finite and $\mathcal{T}$ is countably infinite. %and denote $P_{\mathcal{V}}(.|t^k_1)$ as the conditional distribution over the token set in $\mathcal{V}$. 

%\karen{Do we need this assumption? of optimally trained?}. %We assume the tokenized LM is optimally trained with cross-entropy loss on encodings generated by tokenizing strings produced by the groundtruth byte-level $P(x_{n+1}|x^n_1)$ 

% Since this process is induced from the byte-level one with tokenization, $P(t^k_1)$ follows a valid encoding constraint imposed by the encoding rule, detailed in Section \ref{data_generating_processes}. 
%Finally, we denote set of strings with prefix $x^n_1$ as $\mathcal{S}(x^n_1)=\{s|x^n_1\in \mathrm{prefix}(s)\}$. We denote the set of all strings whose encodings start with $t^k_1$ as $\mathcal{S}(t^k_1) =\{s|t^k_1 = \mathrm{encode}(s)^k_1\}$.
%\textbf{Language Models.} %

%We assume having access to a tokenized autoregressive LM with parameters $\theta$ that is trained with tokens from $\mathcal{V}$. The target distributions on the character domain is denoted as $P_{\mathrm{gt}}(x_{n+1}|x^n_1)$ and on the token domain is $P_{\mathrm{gt}}(t_{i+1}|t^i_1)$. Unless otherwise stated, we  assume each probability term involves $\theta$. and one can compute $P(t_{i+1}|t^i_1)$ for any integer $i>0$.  In this work, we consider LMs trained under the standard setup, where each string $s$ in the dataset is first tokenized with the encoding function $\mathrm{encode}(.)$ and vocabulary $\mathcal{V}$, and the parameters $\theta$ are optimized to maximize the predictive likelihood of the next token in the tokenized dataset.
\vspace{-5pt}\section{Language Models and Tokenization Bias}\label{data_generating_processes}\vspace{-5pt}
\subsection{Statistical Equivalence Between Data Generating Processes}
%\karen{replace this sentence: to what end are we introducing statistical equivalence} 
%We are interested in the process of generating text using tokenized LMs induced from byte-level distribution, is statistically equivalent to generating text from the original byte-level source, meaning they produce identical statistical distributions. 
We  begin by establishing the definition of statistical equivalence between two stochastic processes. Then,  we show that byte-level LMs and their induced tokenized LMs are statistically equivalent. 

\karen{is this not supposed to say that ther is (existence) statistically equivalent processes, your sentence makes it sound like we derive something. Like something is wrong with this sentence we need to establish statisticl equivalence to than say two statistially equivalent models will produce the same text}
\begin{definition} (Statistical Equivalence)
    For a countably infinite set $\mathcal{X}$, the byte-level data generating processes $\mathcal{G}_1$ and $\mathcal{G}_2$ are statistically equivalent if and only if: %\karen{iff ?}: %\karen{next line edited?}
\begin{equation*}
    P_{\mathcal{G}_1}(\mathbf{x}) = P_{\mathcal{G}_2}(\mathbf{x}) \text{ for all } \mathbf{x} \in \mathcal{X}, 
\end{equation*} 
 i.e., the chance of sampling  $\mathbf{x}$ are identical for processes $\mathcal{G}_1$ and $\mathcal{G}_2$, denoted by their subscripts. 
\end{definition}
%This definition relies on the assumption that $\mathcal{X}$ \karen{and \mathcal{T}?, but also you are mixing definitions now, like just say their respective alphabets e,g. X or T} is countably infinite.
We consider the following two stochastic processes:
\begin{itemize}
    \item $\mathcal{G}_1:$ the ground truth byte-level language models $P(x_{n+1}|x^n_1)$.
    
    \item $\mathcal{G}_2:$ consists of the tokenized model $P(t_{k+1}|t^k_1)$ induced from the process $\mathcal{G}_1$ and tokenization. The process $\mathcal{G}_2$ generates sequence $\mathbf{x}$ by autoregressively sampling new $t_{k+1}$ and maps $t_{k+1}$ to its byte-level representation using the $\mathrm{decode}(.)$ function.
\end{itemize}
Since every sequence $\mathbf{x}$  maps to a unique encoding $\mathbf{t} {=} \mathrm{encode}(\mathbf{x})$ that again maps to the same $\mathbf{x}$, we have $ P(\mathbf{x}) {=} P(\mathbf{t})$, and thus $ P_{\mathcal{G}_1}(\mathbf{x}) {=} P_{\mathcal{G}_2}(\mathbf{x})$ or two processes are statistically equivalent. This correspondence is unique because \(\mathbf{x}\) must end with \verb!<EOS>!. Importantly, we note that in general, \( P(x^n_1) \neq P(\mathrm{encode}(x^n_1)) \), as there exist other encodings in addition to \(\mathrm{encode}(x^n_1)\) sharing the same prefix \(x^n_1\). Nevertheless, this establishment implies that tokenized LMs can be converted into a statistically equivalent byte-level counterpart.

\vspace{-5pt}
\subsection{Tokenization Bias}\label{sec_sampling_bias}\vspace{-5pt}
Despite their statistical equivalence, the generation behavior of tokenized and byte-level LMs can be significantly different when prompted with the same string. We characterize the tokenization bias phenomenon that describes this discrepancy between conditioning domains, i.e. bytes versus tokens.

\begin{definition}\label{def_sampbias}
(Tokenization Bias) Let the input prompt  $x^{n}_1$ have the corresponding encoding $t^k_1{=}\mathrm{encode}(x^{n}_1)$. The tokenization bias occurs for this prompt when:
\begin{equation}
 P(x_{n+1}|x^{n}_1)\neq P(x_{n+1}|t^{k}_1),   
\end{equation}
 where  $P(x_{n+1}|t^{k}_1) = \sum_{t\in\mathcal{E}} P(t_{k+1}{=}t|t^k_1)$ and $\mathcal{E}=\{ t \in \mathcal{V} | x_{n+1} = \mathrm{decode}(t)_1\}$, i.e. the set of tokens  whose first byte is $x_{n+1}$. %\karen{grammar}. 
\vspace{-5pt}
\end{definition} For example, the probability of the next byte being ``\texttt{c}" may be different from the sum of the probabilities of all tokens that start with ``\texttt{c}", which offers a broader perspective compared to the probability of the subsequent token being exactly ``\texttt{c}". 
%
%Empirically quantifying tokenization bias using existing LMs is difficult due to many factors, such as data and architectural differences. To provide insight into this phenomenon, we use a simplified data-generating process, i.e. Markov chains, to demonstrate its occurrence in a more tractable setting. 

%\karen{edit this paragraph: i think we can live wo justification}

\subsubsection{Tokenization Bias in Markov Chains}\label{example:markov}
%\documentclass{report}
%\usepackage{tikz}
%\usetikzlibrary{shapes,arrows,positioning}
%\usetikzlibrary{automata,arrows,positioning,calc}
%\usetikzlibrary{arrows.meta, automata,
%                positioning,
%                quotes}
%\usepackage{amsmath}

%\begin{document}

\begin{figure*}[t]
    \centering
    \vspace{-40pt}
    \begin{tikzpicture}[->, >=stealth', auto, semithick, node distance=1.5cm]
	\tikzstyle{every state}=[fill=white,draw=black,thick,text=black,scale=1]
	\node[state, fill=green,  opacity=.0, draw=black,thick,  text opacity=1, minimum size=0.5cm,inner sep=0pt, outer sep=0pt]  (X)  {\footnotesize};
 \node[state, fill=white,  opacity=1.0, draw=black,thick,  text opacity=1, minimum size=0.5cm,inner sep=0pt, outer sep=0pt]  (A)[below = 0.4cm of X]  {\footnotesize$\texttt{A}$};
	\node[state, fill=white, opacity=1.0, draw=black,thick,  text opacity=1, minimum size=0.5cm,inner sep=0pt, outer sep=0pt]    (B)[right of=A]   {\footnotesize$\texttt{B}$};
 \node[state, fill=green,  opacity=.0, draw=black,thick,  text opacity=1, minimum size=0.5cm,inner sep=0pt, outer sep=0pt]  (C)[below = 1.2cm of X]  {\footnotesize};
 \node[][below right = 0.1cm and -0.8cm of C] (M1) {\footnotesize \textcolor{white}{Before Tokenization }}; %{\footnotesize Before Tokenization };
 \node[][below = -0.05cm of M1] (M2) {\footnotesize \textit{Input Prompt:} ``\texttt{AABABAA}" $\textcolor{white}{|}$};
 %\node[][below right = 1.2cm and -1.10cm of A] (M2) {\footnotesize $``AABABAAAABAAB" \rightarrow$};
	\path
	(A) edge[loop left]			node{\scriptsize$1 {-} \alpha$}	(A)
	(B) edge[bend left,below]	node{\scriptsize$\beta$}	(A)
        (A) edge[bend left,above]   node{\scriptsize$\alpha$}	(B)
        (B) edge[loop right]		node{\scriptsize$1 {-} \beta$}	(B);
	\end{tikzpicture}\hspace{-0.1cm} 
 \begin{tikzpicture}
     \node [shape=rectangle, align=center](table2) {
            \small
            \begin{tabular}{|c|c|}
            \hline
                ID & Token  \\
                \hline 
                1 &$ \texttt{A}$ \\
                \hline 
                2 & $\texttt{B}$ \\
                \hline
                3 & $\texttt{AA}$ \\
                \hline
            \end{tabular}
        };
        \node[][above right = -0.1cm and -2.45cm of table2] {\footnotesize Token Vocabulary};
        \node[shape=rectangle,draw=black][below = 0.25cm of table2] (bpe) {\footnotesize BPE/ MPE };
        \node[][left = 0.0cm of bpe] {\footnotesize $\longrightarrow$};
        \node[][right = 0.0cm of bpe] {\footnotesize $\longrightarrow$};
 \end{tikzpicture}\hspace{-0.1cm}
	\begin{tikzpicture}[->, >=stealth', auto, semithick]
	\tikzstyle{every state}=[fill=white,draw=black,thick,text=black,scale=1]
	\node[state, fill=maincolor2,  opacity=.3, draw=black,thick,  text opacity=1, minimum size=0.5cm,inner sep=0pt, outer sep=0pt]  (C)  {\footnotesize$\texttt{AA}$};
 \node[state, fill=maincolor1, opacity=.3, draw=black,thick,  text opacity=1, minimum size=0.5cm,inner sep=0pt, outer sep=0pt]    (A)[below right = 1.2cm and 0.6cm of C]   {\footnotesize$\texttt{A}$};
	\node[state, fill=maincolor3, opacity=.5, draw=black,thick,  text opacity=1, minimum size=0.5cm,inner sep=0pt, outer sep=0pt]    (B)[right = 1.25cm of C]   {\footnotesize$\texttt{B}$};
 %\node[][below right = 0.15cm and -1.5cm of A] (M1) {\footnotesize After Tokenization };
 \node[][below = -0.05cm of M1] (M2) {\footnotesize \textit{Output Tokens:} ``\textcolor{maincolor2}{\texttt{AA}}\textbar\textcolor{maincolor3}{\texttt{B}}\textbar\textcolor{maincolor1}{\texttt{A}}\textbar\textcolor{maincolor3}{\texttt{B}}\textbar\textcolor{maincolor2}{\texttt{AA}}"};
	\path
	(B) edge[,left]	node{\scriptsize$\alpha\beta$}	(A)
        (A) edge[red][bend right,right]   node{\scriptsize$1.0$}	(B)
        (A) edge[red][left, right]   node[pos=0.65]{\scriptsize$0.0$}	(C)
        (A) edge[red][out=240,in=300, loop, right,right] node[pos=0.8]{\scriptsize$0.0$} (A)
        (C) edge[out=170,in=120, loop, left] node{\scriptsize${(}1{-}\alpha{)}^{2}$} (C)
        (B) edge[,below]   node{\scriptsize$(1{-}\alpha)\beta$} (C)
        (C) edge[bend right,left]   node{\scriptsize$\alpha{(}1{-}\alpha{)}$} (A)
        (C) edge[bend left,above]   node{\scriptsize$\alpha$} (B)
        (B) edge[in=10,out=60, loop, right] node{\scriptsize$1{-}\beta$}	(B);
	\end{tikzpicture}
 %\vspace{-4pt}
 %\captionsetup{font=footnotesize}
 \caption{Tokenization bias on a $1^{st}$-order Markov chain. Given the context token $``\texttt{A}"$, the model will never sample  the next token as $``\texttt{A}"$,  rather than with probability $1-\alpha$. In practice, this bias occurs when prompts end with tokens that are part of another token, a common issue in the FIM tasks, leading to incorrect completions by the model.}   %\textcolor{red}{Explain how this relates to the FIM Task.}} 
 \label{markovchange}
 \vspace{-10pt}
\end{figure*}
%\end{document}

%% For Brandon's emacs, please do not remove:
%%% Local Variables:
%%% TeX-master: "../main"
%%% TeX-command-extra-options: "-shell-escape"
%%% End:

%\karen{why MC chains? fit assumptions and easy to analyse due to closed form joint}
To systematically study tokenization bias, we employ a simplified autoregressive model, representing the data generating process as a Markov chain. As a result, we can derive a closed-form expression for the induced tokenized model and directly observe the tokenization bias phenomenon. 

We consider a $1^\mathrm{st}$ order Markov chain with two states $\{``\texttt{A}",``\texttt{B}"\}$, shown in Figure \ref{markovchange} (left) where each string is tokenized with either MPE or BPE, which yields the same result. With the vocabulary  $\mathcal{V}=\{``\texttt{AA}",``\texttt{A}",``\texttt{B}"\}$, we obtain a new Markov chain whose states and transition matrix are shown in Figure \ref{markovchange} (right). Appendix \ref{markov_example} provides details on computing the transition matrix.  The statistical equivalency between two chains is described in \cite{rajaraman2024toward}. First, notice that no tokenization bias occurs when conditioning on $t_1{=}``\texttt{AA}"$ or $t_1{=}``\texttt{B}"$, e.g. 
\begin{equation*}
    P(x_{3}{=}``\texttt{A}"|x^2_{1}{=}``\texttt{AA}") = P(t_{2}{=}``\texttt{AA}"|t_{1}{=}``\texttt{AA}")+P(t_{2}{=}``\texttt{A}"|t_{1}{=}``\texttt{AA}")=1-\alpha
\end{equation*}

\begin{wrapfigure}{r}{0.35\textwidth} % 'r' for right alignment, width is 40% of the text width
\vspace{-12pt}
    \centering
    \includegraphics[width=0.35\textwidth]{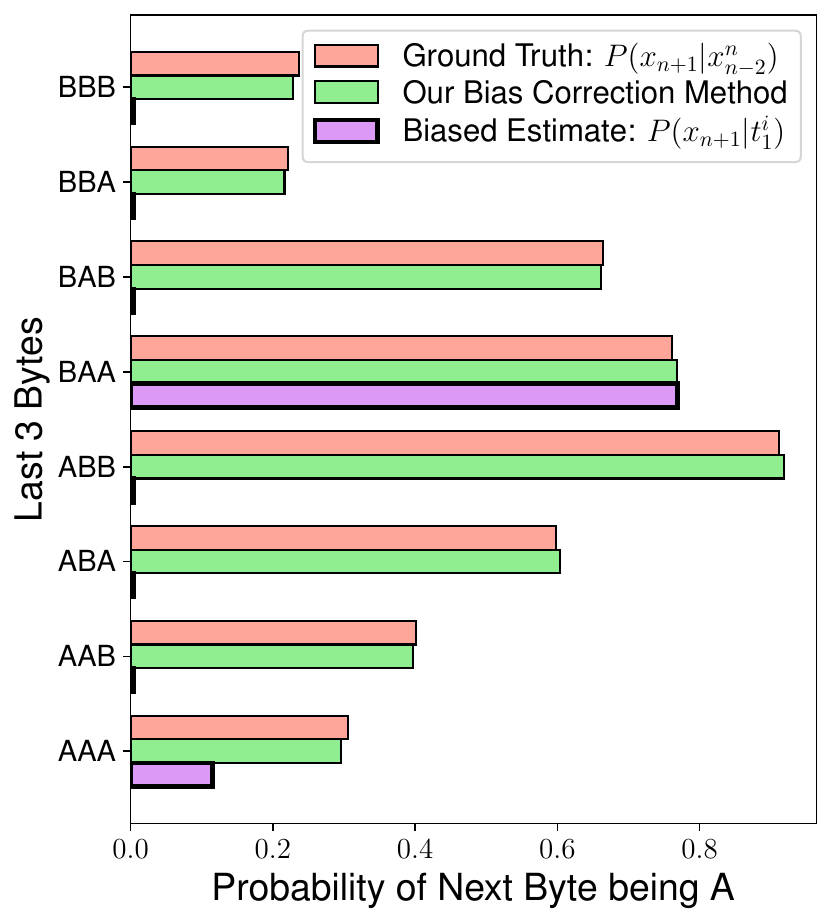} % Adjust image width accordingly
    %\captionsetup{font=footnotesize}
    \vspace{-20pt}
\caption{Tokenization bias on a $3^{\mathrm{rd}}$ order Markov chain. Our byte-token conversion (bias correction) method in  Section \ref{method_section} accurately recovers $P(x_{n+1}|x^n_1)$ of the original chain.}
    \label{fig:markov_result}
    \vspace{-15pt}
\end{wrapfigure}

 However, when conditioning on $t_1{=}``\texttt{A}"$, tokenization bias emerges, i.e. the probability $P(x_2{=}``\texttt{A}"|t_{1}{=}``\texttt{A}"){=}0.0$ is not equal to $P(x_{2}{=}``\texttt{A}"|x_{1}{=}``\texttt{A}")=1{-}\alpha$, i.e. the token-level Markov chain never samples $``\texttt{A}"$ with any prompt ending with token $``\texttt{A}"$. The reason is that  whenever two consecutive bytes $``\texttt{A}"$ appear together, the tokenizer immediately merges them into a single token $``\texttt{AA}"$, results in zero probability. Despite being simple, this model portrays the exact phenomenon as the coding example discussed in Figure \ref{fig:intro} (Left).
%\karen{connect to next section + sa that this is where our coding example is motivated}
%In fact, for any prompt ending with token $``A"$, e.g $``AA|A"$ and $``B|A"$ (tokens are separated by $``|"$), the token-level Markov chain will always output $``B"$ since the tokenizer forces token $``A"$ to be followed by token $``B"$. Otherwise, it will merge them together and form  the token $``AA"$. 
%we now showcase this phenomenon in a more slightly more complicated Markov chain but instead of relying on mathematical derivation, we train a decoder transformer on the tokenized data and show the bias. We will then go further and showcase the effectiveness of our byte-token conversion method (bias correction) in this scenario. 

\textbf{Higher-Order Markov Chain.} We now show this phenomenon in a more complex, $3^{\mathrm{rd}}$ order Markov chain, but shift from mathematical derivation to an empirical approach to demonstrate its relevance in practice. Furthermore, we leverage this example to illustrate the effectiveness of our byte-token conversion method, to be introduced in Section \ref{method_section}, as a viable bias correction technique. Here, we train a decoder transformer on tokenized data and analyse its next-token probabilities, shown in Figure \ref{fig:markov_result}. As we expect, the trained tokenized model exhibits severe bias in its predictions. Nevertheless, when combined with our bias correction technique, we accurately recover the original transition probabilities,  demonstrating its potential for generalization to other LMs. We detail the experiment setup  in Appendix \ref{markov_example}.

\subsubsection{Invalid Encodings}\vspace{0pt}
The investigation on the zero probability events observed in our Markov chain example leads us to  introduce the definition of invalid encodings, which sets constraints on the probability $P(\vec{t})$ of  tokenized LMs. We remind the readers that this definition specifically targets BPE/MPE.

%Besides, it allows us to compute $P(x^n_1)$ efficiently, which we will show in Section \ref{sec:search_algorithm}.

%In fact, the existence of invalid encodings explains the zero probability in the  tokenization bias phenomenon, i.e. $P(\vec{t})=0.0$ if $\vec{t}$ is invalid, shown in Corollary \ref{invalid_prob}. 
 
%\textcolor{red}{[TODO]: Say extract all cover encodings could be time consuming. For tokenizer such as MPE and BPE, there exists an efficient algorithm for this, which is the definition of invalid encodings. In fact, the existence of invalid encodings explains the 0.0 probability in the sampling-bias problem.}

\begin{definition}\label{invalid_enc}
(Invalid Encodings) An encoding $\vec{t}$ is invalid under BPE/MPE encoding scheme if $\mathrm{encode}(\mathrm{decode}(\vec{t})) {\neq} \vec{t}$, else it is valid. 
\end{definition}
In other words, invalid encodings represent the string with different tokens than the one produced by the tokenizer. For example, assume MPE tokenizer and $\mathcal{V}{=}\{``\texttt{c}", ``\texttt{a}", ``\texttt{t}", ``\texttt{at}", ``\texttt{cat}"\}$, then $\mathrm{encode}(``\texttt{catt}"){=}[``\texttt{cat}", ``\texttt{t}"]$. Encodings  $[{``\texttt{c}"{,}``\texttt{at}"{,}``\texttt{t}"}]$ and $[{``\texttt{c}"{,}``\texttt{a}"{,}``\texttt{t}"{,}``\texttt{t}"}]$ are invalid.

With this definition, Proposition \ref{invalid_prob} states that no invalid encodings will exist in the token distribution.

\begin{proposition}\label{invalid_prob}
   $\mathcal{X}(t^k_1) = \varnothing$ if and only if $t^k_1$ is invalid. As a result, $P(t^k_1)=0.0$. Furthermore,  $P(t_k|t^{k-1}_1) = 0.0$ if $t^{k-1}_1$ is valid, otherwise, it is undefined.

   Proof. See Appendix \ref{proof_invalid_token}
\end{proposition}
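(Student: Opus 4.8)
The plan is to prove the two statements in turn, both hinging on the definitions of valid/invalid encodings and on the deterministic, lossless nature of the tokenizer.

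\textbf{Step 1: Characterizing $\mathcal{X}(t^k_1) = \varnothing$.} Recall $\mathcal{X}(t^k_1) = \{\mathbf{x} \mid t^k_1 = \mathrm{encode}(\mathbf{x})^k_1\}$, the set of byte sequences whose encoding begins with $t^k_1$. First I would show the ``if'' direction: suppose $t^k_1$ is invalid, i.e. $\mathrm{encode}(\mathrm{decode}(t^k_1)) \neq t^k_1$. I claim no $\mathbf{x}$ can have $t^k_1$ as an encoding prefix. The key observation is that both BPE and MPE are \emph{prefix-consistent} in the following sense: if $\mathbf{x}$ is any byte sequence and $\mathrm{decode}(t^k_1)$ is a prefix of $\mathbf{x}$ obtained by decoding the first $k$ tokens of $\mathrm{encode}(\mathbf{x})$, then those first $k$ tokens must equal $\mathrm{encode}(\mathrm{decode}(t^k_1))$ — the tokenizer's decision on an initial byte-block does not depend on what follows it (for MPE this is immediate from the greedy longest-prefix rule; for BPE it requires the standard argument, presumably done in Appendix~\ref{proof_invalid_token}, that merges within a completed block are unaffected by later bytes). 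Hence if some $\mathbf{x} \in \mathcal{X}(t^k_1)$ existed, we would get $t^k_1 = \mathrm{encode}(\mathbf{x})^k_1 = \mathrm{encode}(\mathrm{decode}(t^k_1))$, contradicting invalidity. For the ``only if'' direction: suppose $t^k_1$ is valid, so $\mathrm{encode}(\mathrm{decode}(t^k_1)) = t^k_1$. Then take $\mathbf{x} = \mathrm{concat}(\mathrm{decode}(t^k_1), \verb!<EOS>!)$ (or any sequence extending $\mathrm{decode}(t^k_1)$); its encoding starts with $t^k_1$ by validity and prefix-consistency, so $\mathbf{x} \in \mathcal{X}(t^k_1)$ and the set is nonempty.

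\textbf{Step 2: $P(t^k_1) = 0$ for invalid $t^k_1$.} This is immediate: by the setup, $P(t^k_1)$ is the probability of sampling a sequence whose encoding starts with $t^k_1$, i.e. $P(t^k_1) = P_{\mathcal{G}_1}(\mathcal{X}(t^k_1)) = \sum_{\mathbf{x} \in \mathcal{X}(t^k_1)} P(\mathbf{x})$, and by Step 1 this sum is empty, hence $0$.

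\textbf{Step 3: The conditional statement.} Write $P(t_k \mid t^{k-1}_1) = P(t^k_1)/P(t^{k-1}_1)$ when $P(t^{k-1}_1) > 0$. If $t^{k-1}_1$ is valid, then $P(t^{k-1}_1) > 0$: indeed $\mathcal{X}(t^{k-1}_1) \neq \varnothing$ by Step 1, and since every byte sequence has positive probability along any realizable prefix (each conditional $P(x_{n+1}\mid x^n_1)$ with $P(x^n_1)>0$ — in particular $P(\verb!<EOS>!\mid x^n_1) \neq 0$ — keeps the prefix reachable), we get $P(t^{k-1}_1) > 0$. Now I would argue that if $t^{k-1}_1$ is valid but $t^k_1$ extends it \emph{as an invalid encoding}, then $P(t_k \mid t^{k-1}_1) = P(t^k_1)/P(t^{k-1}_1) = 0/P(t^{k-1}_1) = 0$ by Step 2. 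One subtlety to address: the proposition says ``$P(t_k \mid t^{k-1}_1) = 0.0$ if $t^{k-1}_1$ is valid'' — but this should be read as quantified over those $t_k$ for which $t^k_1$ is invalid (a valid $t^{k-1}_1$ can of course be extended by some token to a valid encoding with nonzero probability); I would state this explicitly to match the intended reading, or note that the claim as literally written concerns the invalid-extension case. Finally, if $t^{k-1}_1$ is invalid, then $P(t^{k-1}_1) = 0$ by Step 2, so the ratio $P(t^k_1)/P(t^{k-1}_1) = 0/0$ is undefined, giving the last clause.

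The main obstacle is \textbf{Step 1's prefix-consistency property for BPE}: unlike MPE's transparent greedy rule, BPE applies a global sequence of merges, so one must verify that the merges producing the first $k$ tokens of $\mathrm{encode}(\mathbf{x})$ coincide with those of $\mathrm{encode}(\mathrm{decode}(t^k_1))$ regardless of the suffix. I would lean on the appendix's treatment of BPE (Algorithm~\ref{algo:BPE}) and the analogous lemma from \cite{rajaraman2024toward}, rather than re-deriving it; the rest of the argument is essentially bookkeeping with the definitions.
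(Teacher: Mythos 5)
Your proposal is correct and follows essentially the same route as the paper's proof: both directions rest on the prefix-consistency of BPE/MPE (that the tokenization of a completed prefix block is unaffected by what follows it), which is exactly the property the paper's appendix establishes by re-running BPE in parallel on two extensions of $\mathrm{decode}(t^k_1)$ (and via the greedy longest-match rule for MPE). Your Steps 2--3, including the observation that the conditional claim must be read as quantified over those $t_k$ for which $t^k_1$ is invalid, just make explicit the probabilistic consequences that the paper states without further proof.
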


\begin{remark}\label{remark_invalid_enc}
    Proposition \ref{invalid_prob} indicates the constraints of $P(t^k_1)$ for tokenized LM induced by the byte-level data generating process. For trained LMs,  $P(t_k|t^{k-1}_1) {\neq} 0$ for invalid $t^k_1$ due to softmax activations but we observe it to be very low compared to the valid $t^k_1$, similar to those in Figure \ref{fig:markov_result}. Since invalid encodings are provably non-existent in the ground-truth token-level distribution, we can truncate these invalid probabilities to zero without any loss in terms of perplexity score, detailed in Appendix \ref{lm_justify}. From now, we assume that the LMs will assign zero probability to provably non-existent encodings in the token-level distribution. 
\end{remark}
%In practice, the probability for invalid encodings in most LMs is very low compared to the valid ones but is not exactly zero due to softmax activations. Appendix \ref{lm_justify} shows that truncating these invalid probability to zero does not yield any loss in terms of perplexity score.\karen{reader not sure what to take away from this remark, clarify the point}

%\begin{remark} 
%\textcolor{red}{optimally trained add}
    
%\end{remark}

\section{ Exact Byte-Level Probabilities}\label{method_section}\vspace{-6pt}
\subsection{Byte-Token Representation Lemma for $P(x_1^n)$}\label{btr_subsection}\vspace{-3pt}

\begin{figure}[t]
%\vskip 0.2in
\vspace{-20pt}
\begin{center}
\centerline{\includegraphics[width=0.99\columnwidth]{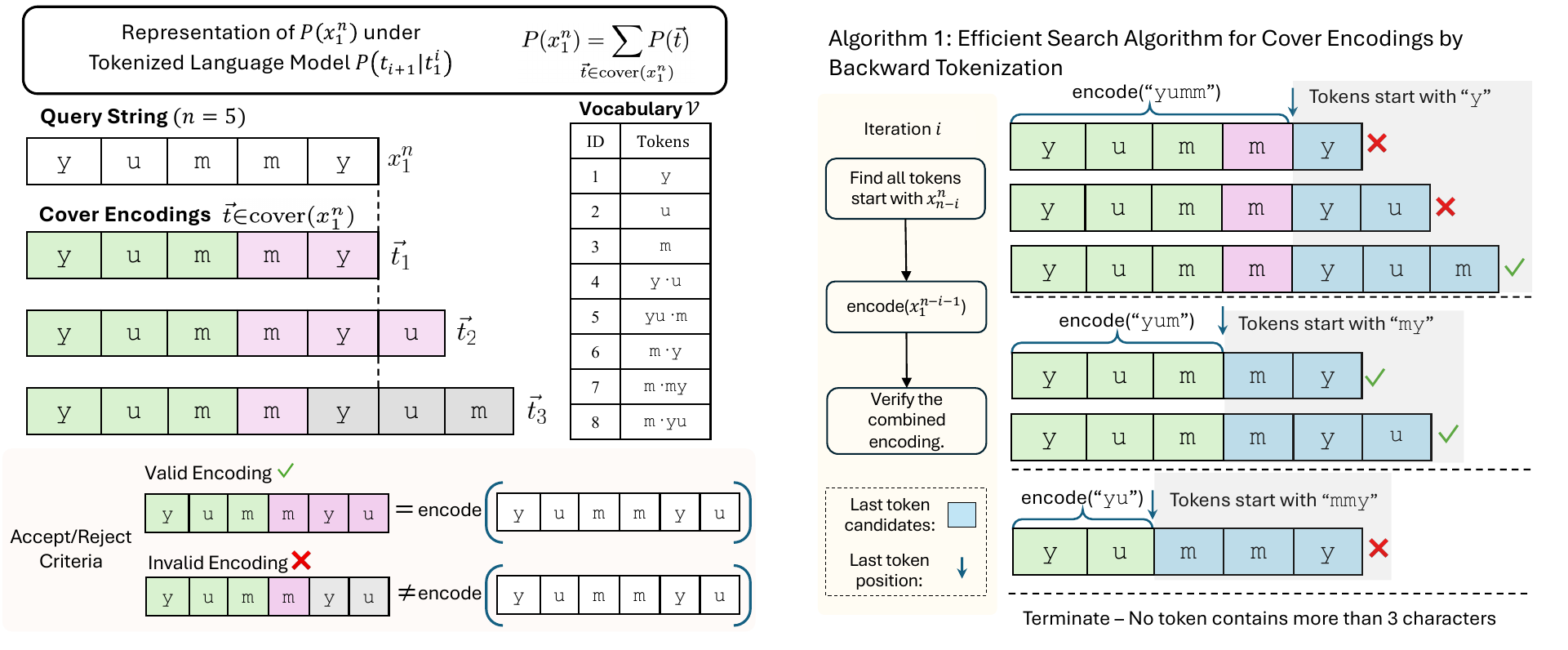}}
\vspace{-3pt}
%\captionsetup{font=footnotesize}
\caption{\textbf{Left:} Representation of $P(x^n_1)$ using tokenized LM with an example of cover encodings and valid/invalid encoding. \textbf{Right:} Illustration of Algorithm \ref{cover_algorithm} for cover encodings search. Green tick and red cross denote valid and invalid encodings respectively (Definition \ref{invalid_enc}).  The termination step can be easily implemented despite not shown in the algorithm. We use BPE in this example.}
\vspace{-25pt}
\label{theorems_main}
\end{center}
\end{figure}

%\bda{the transition here to defining cover encoding is pretty sharp, we could add
%  more motivation, like saying that 1) for LLMs we want to get to the
%  byte-level predictions $P(x_{n+1}\mid x_1^n)$ from the tokenized on
%  2) for the byte-level predictions, we use $P(x_1^n)$, so we start with that.}
 
To addresses the tokenization bias, we derive byte-level predictions $P(x_{n+1}|x^n_1)$ from token-level predictions $P(t_{k+1}|t^k_1)$. To this end, we compute $P(x^n_1)$ and begin with the concept of cover encodings of $x^n_1$. These encodings are valid encodings that ``optimally" contain $x^n_1$, i.e. their last tokens start at locations $i<n$. Results in this section, i.e. Section \ref{btr_subsection}, apply for any deterministic tokenizer, not just BPE/ MPE (also see Remark \ref{remark_invalid_enc}).

\begin{definition}\label{cover_enc_def}
    (Cover Encodings) Given a prefix $x^n_1$, an encoding $t^k_1$ is said to be covering $x^n_1$ when all the following conditions are satisfied:
    \begin{enumerate}
    \setlength{\itemsep}{5pt} % Removes space between items
    \setlength{\parskip}{0pt} % Removes space between paragraphs
    \setlength{\topsep}{-15pt} % Reduces space before the list
    \vspace{-5pt}
        \item $\mathcal{X}(t^k_1) \neq \varnothing$ (i.e. $t^k_1$ is a valid encoding in the case of BPE/MPE).
        %\item $x^n_1 \in \mathrm{prefix}(\mathrm{decode}(t^k_1))$.
        \item There exists an index $i$ such that $x^n_i \in \mathrm{prefix}(\mathrm{decode}(t_k))$ and $\mathrm{decode}(t^{k-1}_1) = x^{i-1}_1$ where $1\leq i \leq n$, i.e. $\mathrm{decode}(t^k_1)$ starts with $x^n_1$ and the last token $t_k$ covers a suffix of $x^n_1$.
    \end{enumerate}  We denote $\mathrm{cover}(x^n_1)$ to be the set of all cover encodings of $x^n_1$, with examples in Figure \ref{theorems_main} (mid-left). %and $\Vec{t} \in \mathrm{cover}(x^n_1)$ is an encoding in $\mathrm{cover}(x^n_1)$.
\end{definition}
We now establish the BTR Lemma that allows us to exactly compute $P(x^n_1)$ from $P(t_{k+1}|t^k_1)$. The main idea is that for any sequence $\mathbf{x}$ starting with $x^n_1$, its encoding $\mathrm{encode}(\mathbf{x})$ must start with one of the encodings in $\mathrm{cover}(x^n_1)$. 

\begin{lemma}\label{marignal_prob_general} (Byte-Token Representation)
    For a  language model $P(t_{k+1}|t^k_1)$, given a prefix $x^n_1$, the following statements hold:
    \begin{enumerate}
        \item For any distinct $\Vec{t}, \Vec{t'} \in \mathrm{cover}(x^n_1)$, we have $\mathcal{X}(\Vec{t}) \cap \mathcal{X}(\Vec{t'}){=}\varnothing$.
        \item $\mathcal{X}(x^n_1)=\smashoperator{\bigcup_{\Vec{t} \in \mathrm{cover}(x^n_1)}} \mathcal{X}(\Vec{t})$.
    \end{enumerate}
       As a result,  $P(x^n_1)$ can be expressed as the marginal probability of all covering tokens of $x^n_1$
    \begin{equation}
        P(x^n_1) = \sum_{\Vec{t} \in \mathrm{cover}(x^n_1)} P(\Vec{t}). \label{BTRL_Eq}
    \end{equation}
Proof. See Appendix \ref{Proof main lemma}.
\end{lemma}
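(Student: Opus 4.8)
The plan is to prove the two set–theoretic claims — the disjointness in Statement~1 and the covering identity in Statement~2 — and then obtain \eqref{BTRL_Eq} by countable additivity of the sequence distribution, using the two bookkeeping identities from Section~\ref{tok_setup}, $P(x^n_1) = P(\mathcal{X}(x^n_1))$ and $P(\vec{t}) = P(\mathcal{X}(\vec{t}))$. Throughout I would lean on the defining property of $\mathcal{X}(\cdot)$: a sequence $\mathbf{x}$ lies in $\mathcal{X}(t^k_1)$ exactly when $t^k_1$ is the length-$k$ token prefix of $\mathrm{encode}(\mathbf{x})$, and in that case $\mathrm{decode}(t^k_1)$ is a byte-prefix of $\mathbf{x}$, since $\mathbf{x} = \mathrm{decode}(\mathrm{encode}(\mathbf{x})) = \mathrm{concat}(\mathrm{decode}(t^k_1), \mathrm{decode}(\text{tail}))$. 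I would also use that every token prefix of a valid encoding is valid, which follows from Proposition~\ref{invalid_prob} (valid $\iff \mathcal{X}(\cdot) \neq \varnothing$) together with the trivial nesting $\mathcal{X}(t^k_1) \subseteq \mathcal{X}(t^{k-1}_1)$. Finally I would read Condition~3 of Definition~\ref{cover_enc_def} in its operative form: the last token $t_k$ of a cover encoding begins at a byte position $\leq n$ within $\mathrm{decode}(t^k_1)$, equivalently $|\mathrm{decode}(t^{k-1}_1)| < n$.

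For Statement~1, suppose $\mathbf{x} \in \mathcal{X}(\vec{t}_i) \cap \mathcal{X}(\vec{t}_j)$ for $\vec{t}_i,\vec{t}_j \in \mathrm{cover}(x^n_1)$ with $|\vec{t}_i| \leq |\vec{t}_j|$. Since both are token prefixes of the single encoding $\mathrm{encode}(\mathbf{x})$, $\vec{t}_i$ is a token prefix of $\vec{t}_j$, and it is a strict one unless $\vec{t}_i = \vec{t}_j$. In the strict case $\mathrm{decode}(\vec{t}_i)$ is a byte-prefix of the decoding of all tokens of $\vec{t}_j$ except the last; by Condition~3 for $\vec{t}_j$ that byte-string has length $< n$, so $|\mathrm{decode}(\vec{t}_i)| < n$ — contradicting Condition~2 for $\vec{t}_i$, which requires $x^n_1$ (length $n$) to be a byte-prefix of $\mathrm{decode}(\vec{t}_i)$. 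Hence the intersection is empty for distinct cover encodings. (This also confirms no loose reading of Condition~3 is possible, since otherwise nested encodings could both "cover'' $x^n_1$ and disjointness would fail.)

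For Statement~2, the inclusion $\supseteq$ is immediate: if $\mathbf{x} \in \mathcal{X}(\vec{t})$ for a cover encoding $\vec{t}$, then $\mathrm{decode}(\vec{t})$ is a byte-prefix of $\mathbf{x}$ and $x^n_1$ is a byte-prefix of $\mathrm{decode}(\vec{t})$ by Condition~2, so $x^n_1 \in \mathrm{prefix}(\mathbf{x})$, i.e. $\mathbf{x} \in \mathcal{X}(x^n_1)$ (only Condition~2 is used). For $\subseteq$, given $\mathbf{x}$ with $x^n_1 \in \mathrm{prefix}(\mathbf{x})$, let $\mathbf{t} = \mathrm{encode}(\mathbf{x})$, let $t_k$ be the unique token of $\mathbf{t}$ occupying byte position $n$, and set $\vec{t} := t^k_1$. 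Then $\mathbf{x} \in \mathcal{X}(\vec{t})$ by construction, so $\mathcal{X}(\vec{t}) \neq \varnothing$ and $\vec{t}$ is valid by Proposition~\ref{invalid_prob} (Condition~1); $\mathrm{decode}(\vec{t})$ is the length-$(\geq n)$ byte-prefix of $\mathbf{x}$, so $x^n_1$ prefixes it (Condition~2); and $t_k$ starts at byte position $\leq n$, so it covers part of $x^n_1$ (Condition~3). Thus $\vec{t} \in \mathrm{cover}(x^n_1)$ with $\mathbf{x} \in \mathcal{X}(\vec{t})$, giving $\subseteq$. Since every cover encoding $\vec{t}$ satisfies $|\mathrm{decode}(\vec{t})| < n + \max_{t\in\mathcal{V}}|t|$ and tokens have length $\geq 1$, the set $\mathrm{cover}(x^n_1)$ is finite; Statements~1--2 then say $\{\mathcal{X}(\vec{t})\}_{\vec{t}\in\mathrm{cover}(x^n_1)}$ is a finite partition of $\mathcal{X}(x^n_1)$, and additivity together with $P(x^n_1)=P(\mathcal{X}(x^n_1))$, $P(\vec{t})=P(\mathcal{X}(\vec{t}))$ yields \eqref{BTRL_Eq}.

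I expect the one delicate step to be the $\subseteq$ inclusion — precisely, arguing that the truncation $t^k_1$ of $\mathrm{encode}(\mathbf{x})$ at the token containing position $n$ is a genuinely \emph{valid} encoding (Condition~1), not merely "some prefix of an encoding'', and aligning the chosen token $t_k$ with the intended reading of Condition~3. Proposition~\ref{invalid_prob} discharges the first point cleanly, because the constructed $\mathbf{x}$ is an explicit witness that $\mathcal{X}(t^k_1)\neq\varnothing$; the rest is careful bookkeeping of which byte positions each token of $\mathrm{encode}(\mathbf{x})$ occupies. Everything else is routine, and notably the $\supseteq$ direction and the passage to probabilities use nothing beyond the definitions.
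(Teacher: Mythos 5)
Your proposal is correct and follows essentially the same route as the paper's own proof: disjointness via determinism of the tokenizer (two cover encodings that are both token-prefixes of the same $\mathrm{encode}(\mathbf{x})$ force the shorter to be a prefix of the longer, contradicting the cover conditions), the covering identity directly from Definition~\ref{cover_enc_def}, and then additivity over the disjoint sets $\mathcal{X}(\vec{t})$. The only difference is that you spell out the $\subseteq$ direction of Statement~2 (constructing the truncation at the token containing byte position $n$ and certifying its validity via Proposition~\ref{invalid_prob}), which the paper dismisses with ``this follows the definition''; your version is a strictly more complete rendering of the same argument.
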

 
\begin{remark}
The BTR Lemma provides a general view of tokenization bias. Consider the cover encoding example in  Figure \ref{theorems_main} (middle-left). 
There, given the input string $x^5_1=``\texttt{yummy}"$, prompting into the models $\mathrm{encode}(x^5_1)=[``\texttt{yum}",``\texttt{my}"]$ will discards the other two possibilities, resulting in a skew distribution over the next-byte, i.e. tokenization bias. 
\end{remark}

Some scenarios requires conditioning on special tokens, such as when using control/synthetic tokens (e.g., FIM code-infilling). In such cases, we want to compute the byte-level probability conditioned on specific tokens, i.e. $P(x^n_{m+1}|t^k_1)$ where $x^m_1 = \mathrm{decode}(t^k_1)$, or equivalently, $P(x^n_1, t^k_1)$.  Corollary \ref{cond_corol} provides a closed-form expression for this quantity. 

\begin{corollary}\label{cond_corol}
  We can express $P(x^n_1, t^k_1)$
  where $\mathrm{decode}(t^k_1) \in \mathrm{prefix}(x^n_1)$ as follows:
    \begin{equation}
        P(x^n_1, t^k_1) = \sum_{\substack{\Vec{t'} \in \mathrm{cover}(x^n_1) \\ t^k_1 = \Vec{t'}_{[1:k]}}} P(\Vec{t'}),
    \end{equation}
    i.e., to compute $P(x^n_1, t^k_1)$,  we limit the set of cover encodings to those that  starts with $t^k_1$.
    
Proof. See Appendix \ref{Proof condition corollary}.
\end{corollary}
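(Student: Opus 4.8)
The plan is to reduce Corollary~\ref{cond_corol} to Lemma~\ref{marignal_prob_general} by conditioning on the event that the token sequence begins with $t^k_1$. First I would observe that $P(x^n_1, t^k_1)$ is, by the definition of $\mathcal{X}(\cdot)$ given in Section~\ref{tok_setup}, the probability mass of the set
\begin{equation*}
  \mathcal{X}(x^n_1) \cap \mathcal{X}(t^k_1) = \{\mathbf{x} \mid x^n_1 \in \mathrm{prefix}(\mathbf{x}) \text{ and } t^k_1 = \mathrm{encode}(\mathbf{x})^k_1\}.
\end{equation*}
The hypothesis $\mathrm{decode}(t^k_1) \in \mathrm{prefix}(x^n_1)$ (equivalently $x^m_1 = \mathrm{decode}(t^k_1)$ with $m \leq n$) is exactly what makes this intersection nonempty and well-behaved, so I would state it explicitly up front.

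Next I would invoke part~(2) of Lemma~\ref{marignal_prob_general}, which writes $\mathcal{X}(x^n_1)$ as the disjoint union $\bigcup_{\vec{t} \in \mathrm{cover}(x^n_1)} \mathcal{X}(\vec{t})$. Intersecting both sides with $\mathcal{X}(t^k_1)$ gives
\begin{equation*}
  \mathcal{X}(x^n_1) \cap \mathcal{X}(t^k_1) = \smashoperator{\bigcup_{\vec{t} \in \mathrm{cover}(x^n_1)}} \big(\mathcal{X}(\vec{t}) \cap \mathcal{X}(t^k_1)\big),
\end{equation*}
still a disjoint union by part~(1). The crux is then the identity $\mathcal{X}(\vec{t}) \cap \mathcal{X}(t^k_1) = \mathcal{X}(\vec{t})$ when $\vec{t}^k_1 = t^k_1$, and $\mathcal{X}(\vec{t}) \cap \mathcal{X}(t^k_1) = \varnothing$ otherwise. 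The first case is immediate: if $\vec{t}$ starts with $t^k_1$ then any $\mathbf{x}$ whose encoding starts with $\vec{t}$ also has encoding starting with $t^k_1$, so $\mathcal{X}(\vec{t}) \subseteq \mathcal{X}(t^k_1)$. For the second case I would argue that since encodings are prefix-comparable in the relevant sense --- two encodings $\vec{t}$ and $t^k_1$ with $\vec{t}$ of length $\geq k$ either agree on their first $k$ tokens or not --- if they disagree, no sequence can have an encoding that simultaneously starts with $\vec{t}$ and with $t^k_1$; one also has to handle the case $|\vec{t}| < k$, but every cover encoding of $x^n_1$ has length at least $k$ here because $\mathrm{decode}(t^k_1)$ is a prefix of $x^n_1$ and cover encodings must decode to a superstring of $x^n_1$ (combined with validity, this forces the first $k$ tokens question to be well-posed). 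Summing probabilities over the surviving index set $\{\vec{t} \in \mathrm{cover}(x^n_1) : \vec{t}^k_1 = t^k_1\}$ then yields the claimed formula.

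The main obstacle I anticipate is the edge-case bookkeeping around whether a cover encoding $\vec{t}$ can be \emph{shorter} than $t^k_1$ or can have its first $k$ tokens decode to something incompatible with $\mathrm{decode}(t^k_1)$ being a prefix of $x^n_1$; I would want to show cleanly that under the corollary's hypothesis either $\vec{t}$ genuinely starts with $t^k_1$ (and lands in the sum) or $\mathcal{X}(\vec{t}) \cap \mathcal{X}(t^k_1) = \varnothing$, with no third possibility. A clean way to discharge this is to note that any $\mathbf{x} \in \mathcal{X}(\vec{t}) \cap \mathcal{X}(t^k_1)$ has a single well-defined encoding $\mathrm{encode}(\mathbf{x})$ (determinism of the tokenizer), and both $\vec{t}$ and $t^k_1$ are prefixes of that one token sequence, hence comparable; so the only way the intersection is nonempty is $\vec{t}^k_1 = t^k_1$. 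Everything else is then a one-line disjoint-union sum, so I do not expect to grind through any computation beyond that.
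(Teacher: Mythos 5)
Your proposal is correct and follows essentially the same route as the paper, which simply notes that the corollary follows from the BTR Lemma by restricting the disjoint union $\mathcal{X}(x^n_1)=\bigcup_{\vec{t}\in\mathrm{cover}(x^n_1)}\mathcal{X}(\vec{t})$ to those cover encodings beginning with $t^k_1$. Your extra care about the comparability of $\vec{t}$ and $t^k_1$ as prefixes of a single deterministic encoding is a valid (and more explicit) way to discharge the disjointness bookkeeping the paper leaves implicit.
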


Finally, applying Lemma \ref{marignal_prob_general} requires searching all $\vec{t} \in \mathrm{cover}(x^n_1)$, which can be time-consuming with naive search. For the tokenizers under consideration, i.e. BPE and MPE, there exists an efficient algorithm by using the properties of invalid encodings (Definition \ref{invalid_enc}), explained in Section \ref{sec:search_algorithm}.

%\vspace{-10pt}
\begin{algorithm}[t]
%\footnotesize
    \centering
    %\captionsetup{font=footnotesize}
    \caption{(Cover Encodings Search). Compute $P(x^n_1)$ and $P(\vec{t})$ for each $\vec{t} {\in} \mathrm{cover}(x^n_1)$. }\label{cover_algorithm} %The function $\mathrm{is\_valid}$ returns True if the encoding is valid and False otherwise. \bda{the definition of is\_valid seems obvious here}
    \begin{algorithmic}[1]
\Procedure{extract\_cover}{$x^N_1$}       
    \State $\mathrm{cover\_dict} = \{\}$ \textcolor{algocommentcolor}{\textit{ \#  Dictionary $\{\vec{t}: P(\vec{t})\}$}}
    %\State \bda{could notate $\{\vec{t}: P(\vec{t})\}$}
    \For{$i =n-1,...,0$}
    \State \textcolor{algocommentcolor}{\textit{ \# Find all tokens covering  $x^n_{i+1}$ and tokenize the remaining $x^i_1$ }}
    \State $\mathcal{B}=\{t\in\mathcal{V}|x^n_{i {+} 1} {\in} \mathrm{prefix}( \mathrm{decode} ({t}))\}$   
    \State $t^{k-1}_1=\mathrm{encode}(x^i_{1})$
    \State \textcolor{algocommentcolor}{\textit{ \# Compute $P(t^{k}_1)$ where $t^{k}_1=t_1,...t_{k-1}, t$ for all $t\in\mathcal{V}$}}
    \State $P(t^{k}_1) = P(t^{k-1}_1) \times P(t_{k+1}=t|t^{k-1}_1)$ \textcolor{algocommentcolor}{\textit{\# Broadcast Multiplication for all $t$}}
    \State \textcolor{algocommentcolor}{\textit{ \# Remove invalid encodings.}}
    \For{$t \in \mathcal{B}$}
    \State $\vec{t} = \mathrm{concat}(t^{k-1}_1, t)$ 
    %\State \bda{consistency: dict uses ``[]'' here but ``()'' on the next page}
    %\State \textbf{if} $\mathrm{is\_valid}(\, \vec{t} \,)$
    \State $\mathrm{cover\_dict}  [\, \vec{t} \, ] = P(\, \vec{t} \,)$ \textbf{if} $\mathrm{is\_valid}(\, \vec{t} \,)$
    %\State \bda{this ternary operator is not valid Python code as it almost
    %  appears to be assigning ``continue'' into the dict}
    \EndFor
    \EndFor
    \State \textcolor{algocommentcolor}{\textit{\# $\mathrm{cover}(x^n_1) = \mathrm{cover\_dict.keys()}$}}
    \State $P(x^n_1) = \sum\limits_{\vec{t} \in \mathrm{cover\_dict}  } P(\, \vec{t} \, )$ \textcolor{algocommentcolor}{\textit{ \# from the BTR Lemma ( Lemma \ref{marignal_prob_general}).}} 
    \State\Return $P(x^n_1), \, \mathrm{cover\_dict}$
\EndProcedure
    \end{algorithmic}
\end{algorithm}
%\vspace{-10pt}

\subsubsection{Cover Encoding Search Algorithm}\label{sec:search_algorithm}

We present the cover search process for BPE/MPE in Algorithm \ref{cover_algorithm}, which is illustrated in Figure \ref{theorems_main} (right). Note that the algorithm also returns $P(\vec{t})$ for $\vec{t}\in\mathrm{cover}(x^n_1)$ which we will use to sample subsequent characters later on. 
The idea is as follows: instead of searching for cover encodings from left to right, which can be computationally expensive, we search for valid encodings in reverse order, starting from the right. Suppose the last token $t_k$, of a cover encoding $t^k_1$ is known and it has suffix $x^n_{i+1}$, then according to Proposition \ref{invalid_prob}, we must have: $t^{k-1}_1{=}\mathrm{encode}(x^{i}_1)$,   
else $t^k_1$ will be invalid. 

As a result, searching from the reverse order, for any suffix $x^n_{i+1}$, we can find all tokens that start with $x^n_{i+1}$, tokenize $x^i_1$ and check if their combination is valid or not. The number of model runs is at most $n\ell$, where $\ell$ is length of the longest token in $\mathcal{V}$, in order to compute $P(\vec{t})$. In practice, the actual number of inference runs is much lower since  $\mathrm{encode}(x^i_1)$ of the current iterations often contains encodings of the later iterations. Finally, while we can obtain $P(x_{n+1}|x^n_1)$ through factorization using this algorithm, it is not practical for sampling purpose as we need to repeat the process for all $x_{n+1}\in \mathcal{A}$. We next show an efficient alternative in Section \ref{sec_sampling}. \newpage

%\bda{don't need to say the following sentence}
% Finally, we conclude with a brief discussion  on the next-byte sampling bias phenomenon.

%\bda{could have numbered remarks too}

%\textbf{Remarks.} 

\subsection{Efficient Next-Byte Sampling Algorithm for $P(x_{n+1}\mid x_1^n)$}\label{sec_sampling}

%Using Algorithm \ref{cover_algorithm}, we can compute $P(x_{n+1}|x^n_1)$ through factorization. For sampling purpose, it is not practical to repeat this process for the sake of  obtaining the next byte distribution $P_{\mathcal{A}}(.|x^n_1)$.
To efficiently compute $P(x_{n+1}|x^n_1)$ for any $x_{n+1}{=}a \in \mathcal{A}$, we note that $\mathrm{cover}(x^{n+1}_1)$ contains: 
\begin{itemize}
    \item $\mathrm{C}_{n+1}(a)$: encodings $\vec{t}$ from the previous $\mathrm{cover}(x^n_1)$ whose $(n+1)^{\mathrm{th}}$ bytes is $a$. Formally, we have: $\mathrm{C}_{n+1}(a){=} \{\vec{t} {\in} \mathrm{cover}(x^n_1) | \mathrm{decode}(\vec{t} \,)_{n+1} {=} a\}$.
    \item $\bar{\mathrm{C}}_{n+1}(a)$: encodings $\vec{t}$ whose last token starts with $a$ at the $(n+1)$ location. Formally, we have: $\bar{\mathrm{C}}_{n+1}(a){=} \{t^{k+1}_1 | t^k_1 {=} \mathrm{encode}(x^n_1), \mathrm{decode}(t_{k+1} \,)_{1} {=} a\}$.
\end{itemize}
 Since ${\mathrm{C}}_{n+1}(a) \cap \bar{\mathrm{C}}_{n+1}(a) = \varnothing$, then:
\begin{equation}
    P(x^n_1, x_{n+1}=a) = \sum_{{\mathrm{C}}_{n+1}(a)}P(\vec{t} \,)  + \sum_{\bar{\mathrm{C}}_{n+1}(a)}P(\vec{t} \,) , \label{sampling_equation}
\end{equation}
With this formulation, we find $\mathrm{cover}(x^{n+1}_1)$ following the process in Figure \ref{sampling_fig} (also see Algorithm \ref{sampling_alg} in Appendix \ref{app:sampling_algo}).  
Specifically, to find $\mathrm{C}_{n+1}(a)$ and $\bar{\mathrm{C}}_{n+1}(a)$ for every $a\in \mathcal{A}$, we:
\begin{enumerate}
    \setlength{\itemsep}{5pt} % Removes space between items
    \setlength{\parskip}{0pt} % Removes space between paragraphs
    \setlength{\topsep}{-15pt} % Reduces space before the list
    \vspace{-5pt}
    \item  Obtain $\mathrm{cover}(x^n_1)$ using Algorithm \ref{cover_algorithm} or from Step 4 of the previous sample.
    \item Find $\mathrm{C}_{n+1}(a)$ by checking the $(n+1)^{th}$ byte of each $\vec{t} \in \mathrm{cover}(x^n_1)$. We accumulate them for each $a$ respectively.
    \item Find $\bar{\mathrm{C}}_{n+1}(a)$ by querying the conditional distribution over all tokens $P(t_{k+1}|t^k_1)$.  Note that  $P(t^k_1)$ was already computed in $\mathrm{cover}(x^n_1)$.  The encoding accept/reject step is optional.
    \item Obtain $\mathrm{cover}(x^{n+1}_1)$ for all $x_{n+1}=a\in\mathcal{A}$. 
    \item Sample $x_{n+1}$ from all computed $P(x^{n+1}_1)$ (normalizing by $P(x^n_1)$).
\end{enumerate}
%Note that for $\bar{C}_{n+1}(a)$, the probability $P(\vec{t}) = P(t^i_1, t_{i+1}=t)$, we already computed .
Thus we only need to run Algorithm \ref{cover_algorithm} at the beginning of the sampling process. In Step 3, the mapping of what tokens start with what bytes can be pre-computed and the sum over tokens can be parallelized via matrix multiplication. Also, one can avoid storing a large number of encodings in Step 4 by sampling the next byte immediately after Step 3, following Equation (\ref{sampling_equation}), and only create the set of the sampled value $x_{n+1}$, i.e. $\bar{\mathrm{C}}_{n+1}(x_{n+1})$. 

\begin{remark}
Computing $P(x^n_1)$ requires accessing to $P(t_1)$ but models such as the Yi series \citep{young2024yi} do not provide $P(t_1)$. Nevertheless, it is  possible to compute $P(x_{n+1}|x^n_1)$ by leveraging the pre-tokenization pattern such as white spaces or punctuation. Details in Appendix \ref{pratical_tricks}.    
\end{remark}

\begin{figure}[t]
%\vskip 0.2in
\vspace{-30pt}
\begin{center}
\centerline{\includegraphics[width=0.99\columnwidth]{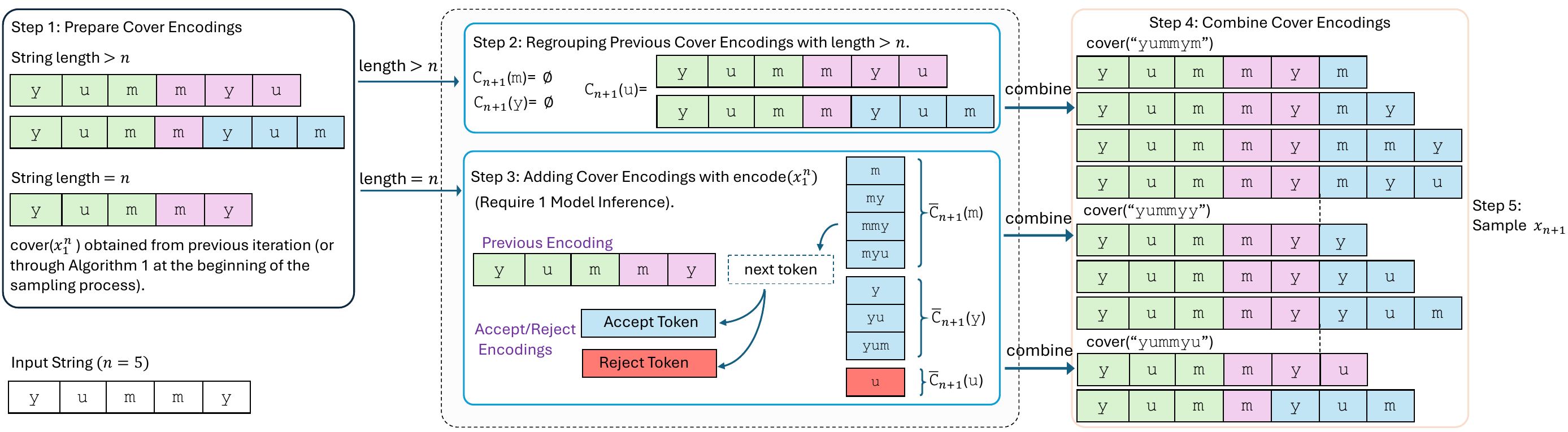}}
\vspace{-6pt}
%\captionsetup{font=footnotesize}
\caption{Illustration of the sampling process for $x_{n+1}$ from tokenized LM $ P(t_{k+1}|t^k_1)$, with example following Figure \ref{theorems_main}. We do not include $P(\vec{t})$ for simplicity. Details in Section \ref{sec_sampling}.}   
\vspace{-22pt}
\label{sampling_fig}
\end{center}
\end{figure}

\section{Related Work}\vspace{-5pt}
%\bda{The following RW seems a little difficult to follow as it's very concise and also forward-references our contribution. One thought is that we could create a new ``Preliminaries and background'' section here and maybe move some of the contents from the intro in here with a little more context: set notation, clearly define the problem, and list out why naive solutions don't work}
%\bda{we could also explain more the FIM setting, and how tokenization bias is in
%  issue in SPM mode}

\textbf{Algorithms for Tokenization Bias.} Tokenization bias has been empirically observed when model produces unusual generations when prompted with string ends with trailing whitespace \citep{gao10256836framework} or mid-word \citep{dagan2024getting}. Token healing \citep{dagan2024getting, guidanceai_github} mitigates this issue by searching for tokens prefixed with the incomplete word. This method, however, fails to consider all matching possibilities. In the  $``\texttt{yummy}"$ example in Figure \ref{theorems_main}, it can only recover two encodings $``\texttt{yum|my}"$ and $``\texttt{yum|myu}"$ (tokens separated by $``|"$) , as $``\texttt{my}"$ is a prefix string of tokens $``\texttt{my}"$ and $``\texttt{myu}"$, missing $``\texttt{yum|m|yum}"$ (also see Appendix \ref{sec:fim_example}). Similar techniques such as token-alignment \citep{athiwaratkun2024token} also failed to address all underlying issues (see Appendix \ref{sec:token_alignment}). Another proposed solution is using specific prompting strategies (\cite{bavarian2022efficient}, also see PSM Mode in Section \ref{sec:code_exp}), requiring additional training and specific prompt structure. In contrast, we tackle the underlying issue by identifying the probabilistic root cause, i.e. the token-byte domain gap, and introduce a conversion technique to recover the unbiased byte-level distribution without requiring retraining. Finally, \citet{pimentel2024compute} address a different but related problem of computing next word probability, sharing some similar analysis but their approach is limited to whitespace-separated tokenizers and does not support autoregressive byte sampling.

\textbf{Language Model Ensembles}
To the best of our knowledge, our work is the first to (i) compute next token probabilities in a universal space for both input and output domains while  (ii) guaranteeing that the statistical properties of all member models are preserved (see Section \ref{method_section}), and (iii) without the need for additional training, data or model modifications. Several works also attempted to combine multiple LMs. \citet{wan2024knowledge} introduce a model distillation technique  by combining predictions from various LMs to fine-tune  a primary model, which does not tackle the issue of vocabulary discrepancies. \citet{jiang2023llm} use ranking approach and evaluating outputs at the paragraph level and comparing them pairwise. However, their approach requires training a scoring function, making it sensitive to the distribution of training data and model choice. \citet{huang2024enabling} maps the token probabilities of member models to a universal space, which relies on token embedding's similarity and involves solving an optimization task.  \citet{gu2024chared} also investigate character-level ensembling, but does not provide  expressions for $P(x^n_1)$ or $P(x_{n+1}|x^n_1)$ and overlooks the issue of invalid encodings. Since their method condition on tokens without addressing the bias, it is susceptible to invalid encoding states and division-by-zero errors, particularly with optimal sources. 

\vspace{-5pt}\section{Experiments}\vspace{-5pt}

\subsection{Code Completion}\label{sec:code_exp}\vspace{-5pt}

Code completion with LLMs is well-suited for our byte-level predictions
because the model is frequently prompted with incomplete code snippets.
Here, we briefly introduce the setting and show that the byte-level predictions
significantly improve upon the standard performance.

\textbf{Fill-in-the-middle (FIM) code completion.}
Code completion is the process of taking source code as input along with
a point in the middle to generate the completion, e.g., the point the user's cursor is at.
So, generating new code in the middle does not fit into the standard
autoregressive generation provided by pre-training models, nor does it
fit into standard system-user-assistant conversation prompt templates.
The solution has been to introduce a fill-in-the-middle (FIM) prompt template
that turns the middle generation back into a standard prompt that can be
completed by an autoregressive model \citep{bavarian2022efficient}.
These FIM prompt templates introduce new tokens or control sequences and cannot
be used with a model that is not aware of them, so code models are
then trained (or fine-tuned) on a dataset of code formatted into
these FIM prompt templates.

\textbf{Prompt templates for FIM.}
Given a \verb!{prefix}! and \verb!{suffix}! along with
the control tokens \verb!<PRE>!, \verb!<MID>!, and \verb!<SUF>!,
there are two main prompt formats for querying a code model to
generate the middle portion:
the {\tt Prefix-Suffix-Middle} (PSM) format \verb!"<PRE> {prefix} <SUF>{suffix} <MID>"!,
and {\tt Suffix-Prefix-Middle} (SPM) format is \verb!"<PRE> <SUF>{suffix} <MID> {prefix}"!.

%Refer to the cover encodings of $\mathrm{``yummy"}$ in Figure \ref{theorems_main}, token healing can only recover two encodings $\mathrm{``yum|my"}$ and $\mathrm{``yum|myu"}$ (tokens separated by $``|"$) , as $\mathrm{``my"}$ is a prefix string of tokens $\mathrm{``my"}$ and $\mathrm{``myu"}$, and overlook the  case of $\mathrm{``yum|m|yum"}$. 

\textbf{Upsides and downsides of PSM and SPM.}
The default and most widely-used mode is PSM \citep{bavarian2022efficient,roziere2023code}.
We speculate this is the default mode because there are no tokenization issues in it:
the incomplete prefix is clearly separated from the middle portion to generate by
the control tokens.
While the PSM format solves tokenization issues with this separation,
the downside is that it creates an unnatural separation between the prefix and
middle generation.
SPM mode overcomes this by prompting the model to generate immediately after
the prefix, without any separating tokens.
This contiguity is desirable as it would leverage more knowledge from the pre-training
task, but now creates a tokenization issue when the prefix stops in the middle of a token.
A standard heuristic to overcoming this in SPM mode is called \emph{token healing},
e.g., as described in \citet{dagan2024getting},
but it does not correct every tokenization issue here. Another fix is token-alignment \citep{athiwaratkun2024token} but it is biased and suboptimal (see Appendix \ref{sec:token_alignment}).
These  issues make the PSM template outperform the SPM template
in every previous results for code infilling.

\begin{figure}[t]
\vspace{-15pt}
\begin{minipage}{0.4\linewidth}
  \centering
\resizebox{\textwidth}{!}{\begin{tabular}{r|cc|cc}\toprule
                         &            \multicolumn{2}{c|}{PSM} &             \multicolumn{2}{c}{SPM} \\
             Predictions &   Greedy &      @10 &   Greedy &      @10 \\ \midrule
             Token-level &     64.9 &     84.0 &     45.0 &     66.5 \\
               + healing &      $|$ &      $|$ &     62.7 &     83.8 \\ 
               + alignment &      $|$ &      $|$ &     	63.0 &     82.9 \\ \midrule
              Byte-level &  \bf 65.7 &  \bf 84.1 &  \bf 63.9 & \bf 84.3 \\
% + beam search (2) & $66.8$ &  & $65.9$ &  \\
%  + beam search (4) & $67.7$ &  & $66.8$ &  \\
  \bottomrule
\end{tabular}\hspace*{-0.5mm}}
\resizebox{\textwidth}{!}{{\color{gray} \footnotesize{@10 here is the max over the temperature sweep on the right}}}
\end{minipage}
\begin{minipage}{0.6\linewidth}
  \centering
  \begin{tikzpicture}
    \node[anchor=south west, inner sep=0] (image) at (0,0) {
      \includegraphics[height=27mm]{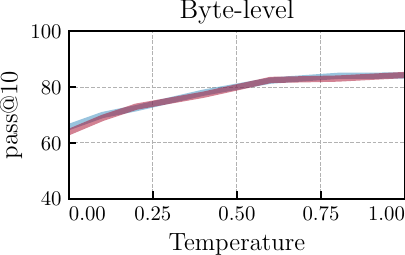} \hspace*{-1.6mm}
      \includegraphics[height=27mm,trim={11mm 0 0 0},clip]{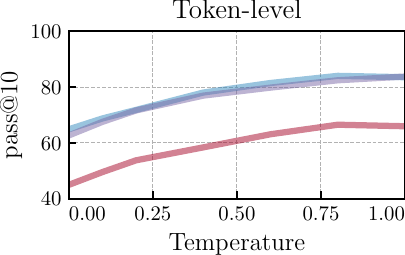}
    };
    \begin{scope}[x={(image.south east)}, y={(image.north west)}]
      \node[anchor=south west] at (0.1, 0.2) {
        \resizebox{1.3in}{!}{%
          \cblock{162}{195}{219} PSM \hspace{2mm}
          \cblock{199}{134}{147} SPM \hspace{2mm}
          \cblock{176}{169}{196} SPM + healing}
        };
    \end{scope}
  \end{tikzpicture}
\end{minipage}
\vspace*{-3mm}
\caption{Pass rates for the HumanEval Random Span benchmark
  from \citet{bavarian2022efficient} with {\tt CodeLlama-7b}
  for infilling Python.
  Across all settings, the byte-level predictions yield better completions.
  We evaluate the two standard infilling prompt templates of {\tt Prefix-Suffix-Middle} (PSM)
  and {\tt Suffix-Prefix-Middle} (SPM), and as baselines use token healing \citep{dagan2024getting} and alignment \citep{athiwaratkun2024token}.
  The SPM template forces the model to make predictions from the middle of a token,
  which our byte-level predictions successfully handle. 
}
\label{fig:fim}
\vspace{-15pt}
\end{figure}

% For Brandon's emacs, please do not remove:
%%% Local Variables:
%%% TeX-master: "../main"
%%% End:

\textbf{Our results.}
We experimented with the standard random-span infilling benchmark from
\citep{bavarian2022efficient}.
Figure~\ref{fig:fim} shows that our byte-level predictions in SPM
mode attain superior performance when using 10 samples,
using {\tt CodeLlama-7b}.
This corroborates our hypothesis that the contiguity between the prompted
prefix and generated middle in SPM mode is more natural and consistent
with the pre-training objective of the model.
Another surprising result is that the byte-level greedy generations
outperform the token-level greedy generations.

\textbf{Ablation} 
This is surprising because each token typically spans over multiple bytes, hence for greedy generation the token model should have a slight advantage. We wanted to further investigate this finding and hence performed beam search with beam width 2 and 4, which led to consistent improvement of up to 2.9\% and best performance at $67.7$. The complete ablation is located in Appendix \ref{app:beam-search}.

% For Brandon's emacs, please do not remove:
%%% Local Variables:
%%% TeX-master: "../main"
%%% TeX-command-extra-options: "-shell-escape"
%%% End:

\vspace{-5pt}\subsection{Model Ensembles}
\textbf{Ensemble methods} rely on two principles; model prediction averaging (aggregation) and training each member of a model ensemble on a subset of all available data (bootstrapping). 
Ensemble methods are  a popular choice because aggregation reduces the variance of the expected empirical test error, and  bootstrapping reduces model bias. 
As a result, bootstrap aggregation leads to model ensembles that are generally more accurate and robust than any individual member model \citep{hastie2009elements}. 
The recent surge of open-weights LMs,  each presumably trained on different subsets of available text, makes these LLMs ideal candidates for bootstrap aggregation.

\textbf{Our method.} Unfortunately, vocabulary discrepancies between LMs prevent direct aggregation, as models map to different token spaces. Our method solves this by enabling exact next-byte prediction, allowing any LM to map into the same space, so byte probabilities can be aggregated without restrictions on the aggregation function and preserving the statistical properties of the member models. In this work, we choose simple averaging. We leave more complex methods for future work.
\begin{figure}[t]
    \vspace{-25pt}
    \centering
    \begin{minipage}{\textwidth}
        \centering
        \resizebox{\textwidth}{!}{ % Resize to fit the page width
            \begin{tabular}{l|cc|cc|cc|cc|cc}
            \toprule
            &  \multicolumn{2}{c}{MMLU} & \multicolumn{2}{c}{PIQA} & \multicolumn{2}{c}{NQ} & \multicolumn{2}{c}{TriviaQA} & \multicolumn{2}{c}{GSM8K} \\
            \midrule
             \textbf{Model} & \textbf{Token} & \textbf{Byte} & \textbf{Token} & \textbf{Byte} & \textbf{Token} & \textbf{Byte} & \textbf{Token} & \textbf{Byte} & \textbf{Token} & \textbf{Byte} \\
            \midrule
            \texttt{LLama2-7B}        &  45.4 & 45.7    &  78.1 & 78.2   &  25.0 & 23.4  &  58.4 & 58.1   &  15.1 & 12.5  \\ 
            \texttt{Yi-1.5-6B}      &  63.4 & 63.4   &  78.5 & 78.5   &  22.8 & 22.7  &  53.7 & 53.4  &  61.3 & \textbf{61.5}    \\
            \texttt{Mistral-7B-v0.3} &  62.1 & 62.1  &  80.1 & 80.3  &  28.5 & 28.8   &  63.6 & 63.5   &  39.0 & 39.0    \\
            \midrule
            %Voting (all)  & & 59.3 &   &  79.1 &   & $|$ &   &  $|$ &   &  $|$   \\
            Voting (top-2)  & & 62.1 &   &  80.1 &   &  $|$ &   &  $|$ &   &  $|$   \\
            Top-2 ensemble (Our)  & & \textbf{65.4} &   &  \textbf{80.7} &   &  \textbf{30.0} &   &  \textbf{64.2} &   &  55.8    \\
            \bottomrule
            \end{tabular}
        }
    \end{minipage}
    \vspace{-4pt}
    \captionof{table}{We evaluate the token and equivalent byte-level model performance of various open source LMs. We further show the byte-ensemble performance of the top-2 performing models. For all benchmarks but GSM8K, byte-level ensembles outperform single models and voting.}
    \label{tab:general_ensemble_results}
    \vspace{-15pt}
\end{figure}

% For Brandon's emacs, please do not remove:
%%% Local Variables:
%%% TeX-master: "main"
%%% End:

\begin{wrapfigure}{r}{0.5\textwidth} 
\vspace{-0pt}
    \centering
    \resizebox{0.48\textwidth}{!}{ % Resize to fit the allocated width
        \begin{tabular}{l|cc|cc}
        \toprule
        &  \multicolumn{2}{c}{Human Eval @1} & \multicolumn{2}{c}{MBPP@1}  \\
        \midrule
        \textbf{Model} & \textbf{Token} & \textbf{Byte} & \textbf{Token} & \textbf{Byte}  \\
        \midrule
        \texttt{CodeLlama2-7b}      &  32.3 & 28.7 &  40.6 & 42.4    \\ 
        \texttt{Codellama2-13b}     &  35.8 & 33.5 &  47.6 & 47.4      \\
        \texttt{Yi-Coder-1.5B}       &  38.4 & 36.5 &  52.8 & \textbf{53.6}   \\
        \midrule
        %Ensemble  & & 37.2 &   &  51.2      \\
        Top-2 Ensemble  &  & \textbf{42.1} &   &  \textbf{53.6}      \\
        \bottomrule
        \end{tabular}
    }
    \vspace{-2pt}
    \captionof{table}{Token and byte-level performance of open-weight code LMs. The byte-ensemble performance of the top-2 models achieves the best results on the Human Eval benchmark while matching Yi-Coder-1.5B on MBPP.}
    \label{tab:code_ensemble_results}
    \vspace{-10pt}
\end{wrapfigure}

% For Brandon's emacs, please do not remove:
%%% Local Variables:
%%% TeX-master: "main"
%%% End:
\textbf{Benchmarks and setup.}
We compare next-token, next-byte, voting and byte-ensembles against one another on multiple knowledge, reasoning, and coding benchmarks. We chose our  benchmarks due to their prevalence in testing general LM ability. Due to computational constraints, we test on 7B models primarily.
Note that multiple choice tasks require likelihood evaluation (MMLU, PIQA) while other tasks require mid to long generations. Finally, voting is not trivially extended to reasoning tasks  without training an evaluator network.
%However, the aforementioned tasks are primarily simple multiple choice questions. In other words, problems that could as easily be improved by simple voting. We hence also test on the math and human eval benchmarks to show performance improvements that can not be achieved by simply voting or some type of token aggregation as this would lead to invalid encodings.

\textbf{Our results.}
Results are shown in Table~\ref{tab:general_ensemble_results} and \ref{tab:code_ensemble_results}. Byte and token predictions are on par with one another. Ensembling the top-2 performing models  consistently lead to performance boost except when member model performance is too divergent, as in the case of GSM8K dataset where \texttt{Yi-1.5-6B} outperforms \texttt{Mistral-7B-v0.3}, the second best LM, by a margin of 20\%. For coding tasks, the ensemble of \texttt{Yi-Coder-1.5B} and \texttt{CodeLlama2-13b} improves up to 3.7\%  on Human Eval dataset. Overall, these results showcase ensemble as an exciting direction for  LMs collaboration. %For reference, it is 1\% higher than StarCoder-15B \citep{starcoder_website} with similar number of weights. 
\vspace{-5pt}
%We could hypothesis that longer predictions would benefit from a longer generation horizon hence giving an edge to token-level prediction. However, we can not see an unambiguous pattern in our data.

% For Brandon's emacs, please do not remove:
%%% Local Variables:
%%% TeX-master: "../main"
%%% TeX-command-extra-options: "-shell-escape"
%%% End:

\vspace{-2pt}\section{Conclusion}\vspace{-2pt}
% contributions
\textbf{Conclusion and limitations.}
This work shows that tokenized LMs have different predictive distributions than their statistically equivalent byte-level counterpart, i.e. tokenization bias. 
We introduced an O(1) next-byte prediction algorithm to mitigate this bias at inference time and showed its empirical relevance for FIM benchmarks and model ensemble tasks. Yet, our method introduces notable memory consumption in its current implementation, and incurs additional linear computational costs (one token = multiple bytes).  At present, these factors may limit its practical application in resource-constrained environments. While our work provides new insights into the effects of tokenization, it can not explain or mitigate all related phenomena such as poor performance on arithmetic tasks. 

% future 
\textbf{Future work.}
While this work sheds some light on token- and byte-level LMs, several open questions remain. Notably, our theory does not provide insight into the employed greedy evaluation process, nor do we examine the cumulative impact of tokenization bias. Regarding the latter, our analysis is limited to single-token predictions, while real-world applications  require generating hundreds of tokens. Building on our findings, future research can explore broader topics, such as bias-variance decomposition in LM ensembles, or interplays between bootstrapping and scaling laws.
Another direction is model distillation, where larger models with larger vocabularies could be distilled into smaller models with specialized tokenization. Our work can also enable prompt optimization, such as for universal adversarial attacks, and facilitates relative uncertainty estimation. Also, this work only scratched the surface of the  possibility of model ensembles.
%The ensemble method we proposed has broad applicability in tasks that benefit from combining multiple expert models, offering more than just simple averaging of homogeneous models. 
For example, models trained separately on different languages may outperform a single model trained on all languages simultaneously, as they can achieve more optimal entropy bounds by leveraging language-specific tokenizations. This might also improve the representation of otherwise underrepresented languages in a training data corpus.
%This approach also holds promise in other fields, such as social science, where specialized models can provide more nuanced representations of different groups. 
Finally, our results also open up new possibilities for tackling mechanism design problems \citep{duetting2024mechanism} involving different language models, where the tokenization bias issue can potentially create unfair advantages for certain models.

\vspace{-5pt}
\section{Acknowledgments}\vspace{-5pt}
We thank Shubham Toshniwal and Jack Lanchantin for early comments and discussions.

%In this paper, we contributed to the theoretical understanding of tokenized language models (LMs) by extending the work of \cite{rajaraman2024toward}. We compared the predictive distribution of any ergodic stationary source of characters to its corresponding k-th order Markov chain with arbitrary tokenization. Our findings reveal two key points: (i) next-token prediction often incurs bias, and (ii) we provided a method to transfer the prediction back to the character level, eliminating tokenization bias. We validated the effectiveness of our approach through a synthetic experiment using a 3rd-order Markov chain, which confirmed that our method achieves statistical equivalence to the token-free models that are unaffected by tokenization bias. Furthermore, by enabling character-level prediction, our method facilitates the ensembling of diverse models, which we successfully applied to various tasks including general knowledge reasoning, examination tasks, and code prediction. Additionally, our approach inherently enhances FIM tasks, surpassing the widely-used token healing approach, which lacks a statistical connection to the model.
%Furthermore, due to the statistical guarantees our method offers, it might be possible to derive theoretical bounds for model ensemble in the future. 
%

% For Brandon's emacs, please do not remove:
%%% Local Variables:
%%% TeX-master: "../main"
%%% TeX-command-extra-options: "-shell-escape"
%%% End:

\bibliography{references}
\bibliographystyle{iclr2025_conference}

\appendix
\newpage
\section{Appendix}

%\subsection{Analysis of \cite{gu2024chared}}
%This section establish the claims . For convenience, we provide their algorithm.

\subsection{Estimate $P(x_{n+1}|x^n_1)$ when $P(t_1)$ is not available} \label{pratical_tricks}
In practice, several models such as the Yi series do not provide $P(t_1)$ since it does not include the beginning of sequence token. As a result, we cannot compute $P(\Vec{t})$. However, we can still compute $P(x_{n+1}|x^n_1)$ since their tokenization algorithm often force splitting words by using pre-tokenization pattern such as the whitespace. Let $x_i$ be the last whitespace byte in $x^n_1$ such that $x_{i-1}$ is not a whitespace. Then we know that $t^k_1 = \mathrm{encode}(x^i_1)$ must be the prefix encoding of all cover encodings of $x^n_1$. This holds not only for $x^n_1$ but also any string $s = \mathrm{concat}(x^n_1, s')$ where $s'$ is an arbitrary suffix.

As such, using the BTR Lemma \ref{marignal_prob_general}, we have:
\begin{align}
    P(x^n_1) &= \sum_{\Vec{t}\in \mathrm{cover}(x^n_1)}P(\Vec{t}) 
            = P(t^k_1) \hspace{5pt} \times  \hspace{5pt}\sum_{\mathclap{\substack{\Vec{t'} \in \mathrm{cover}(x^n_{i+1})}}} P(\Vec{t'}| t^k_1)
\end{align}
and hence, with factorization:
\begin{align}
    P(x_{n+1}|x^n_1) &= \left( \sum_{{\substack{\Vec{t'} \in \mathrm{cover}(x^{n+1}_{i+1})}}} P(\Vec{t'}| t^k_1) \right) \hspace{8pt} \Bigg/ \hspace{10pt} \left(\sum_{{\substack{\Vec{t'} \in \mathrm{cover}(x^{n}_{i+1})}}} P(\Vec{t'}| t^k_1) \right),
\end{align}
which shows that we do not need $P(t^k_1)$ to compute $P(x^n_1)$, since conditioning on them is sufficient. For autoregressive byte generation,  we  condition each probability term in Algorithm \ref{sampling_alg} with $t^k_1$.

\subsection{Proof of Lemma \ref{marignal_prob_general}}\label{Proof main lemma}

\textbf{Lemma 1.} (Byte-Token Representation Lemma)
    For a consistent tokenizer and a corresponding language model $P(t_{k+1}|t^k_1)$, given a prefix $x^n_1$, we have the followings:
    \begin{enumerate}
        \item For any distinct $\Vec{t}, \Vec{t'} \in \mathrm{cover}(x^n_1)$, then $\mathcal{X}(\Vec{t}) \cap \mathcal{X}(\Vec{t'}){=}\varnothing$.
        \item $\mathcal{X}(x^n_1)=\smashoperator{\bigcup_{\Vec{t} \in \mathrm{cover}(x^n_1)}} \mathcal{X}(\Vec{t})$.
    \end{enumerate}
       As a result,  $P(x^n_1)$ can be expressed as the marginal probability of all covering tokens of $x^n_1$
    \begin{equation}
        P(x^n_1) = \sum_{\Vec{t} \in \mathrm{cover}(x^n_1)} P(\Vec{t}).
    \end{equation}
\begin{proof} 
    We prove each point as follows:
    \begin{enumerate}
        \item Proof by contradiction, let $t^i_1, t'^{j}_1 \in \mathrm{cover}(x^n_1)$ and $t^i_1 \neq t'^j_1$. Suppose that there exists a sequence $\mathbf{x}$ where $t^i_1 = \mathrm{encode}(\mathbf{x})^i_1$ and $t'^j_1 = \mathrm{encode}(\mathbf{x})^j_1$. Without the loss of generalization, suppose $i < j$, then $t^i_1 = t'^i_1$ since our tokenizer is deterministic. Hence, $t'^j_1$ cannot be a cover encoding of $x^n_1$. 
        \item This follows the definition of cover encodings.
    \end{enumerate}
    Since each $\mathcal{X}(\Vec{t})$ is pair-wise disjoint, we arrive at the final equation.
\end{proof}

\subsection{Proof of Corollary \ref{cond_corol}}\label{Proof condition corollary}

\textbf{Corollary 1.}
    We can express $P(x^n_1, t^k_1)$ where $\mathrm{decode}(t^k_1) \in \mathrm{prefix}(x^n_1)$ as follows:
    \begin{equation}
        P(x^n_1, t^k_1) = \sum_{\substack{\Vec{t'} \in \mathrm{cover}(x^n_1) \\ t^k_1 = \Vec{t'}_{[1:k]}}} P(\Vec{t'}).
    \end{equation}
    
\begin{proof} 
    The proof follows the one for the BTR Lemma \ref{marignal_prob_general}, where we only consider the cover encodings that start with $t^k_1$.
\end{proof}

\subsection{Proof of Proposition \ref{invalid_prob}}\label{proof_invalid_token}

For completeness, we first show the BPE encoding algorithm in Algorithm \ref{algo:BPE}. For the MPE algorithm, the rule is greedily looking for the longest token that matches the prefix of the given text.

\begin{algorithm}[t]\footnotesize
\captionsetup{font=footnotesize}
\caption{\footnotesize{Byte Pair Encoding Algorithm.}}
\label{algo:BPE}
\begin{algorithmic}[1]

\Procedure{Encode\_BPE}{$x^N_1$, $\mathcal{V}$} 
    \State \textcolor{algocommentcolor}{\textit{ \# Set initial encodings:}}
    \State $\mathrm{c\_tokens} = x^N_1$
    %\State \bda{consistency: can use the same blue comments starting with ``\#'' as before}
    \State \textcolor{algocommentcolor}{\textit{ \# Iterate over merging order in $\mathcal{V}$, the first $|\mathcal{A}|$ entries correspond the the alphabet (no merge happens):}}
    \For{$i =|\mathcal{A}|+1, ...|\mathcal{V}|$}
    \State $\mathrm{c\_tokens} \xleftarrow{} \mathrm{find\_merge}(\mathrm{c\_tokens}, \mathcal{V}[i])$
    \EndFor
\State    \Return $\mathrm{c\_tokens}$ 
\EndProcedure \\

\Procedure{$\mathrm{find\_merge}$}{$\mathrm{c\_tokens}, v$} 
    \State \textcolor{algocommentcolor}{\textit{\# Left and right tokens for merging}}
    \State $t_{\mathrm{left}}, t_{\mathrm{right}},t_{\mathrm{new}} = v[1], v[2], v[3]$
    \State $\mathrm{old\_tokens} = \mathrm{c\_tokens}$
    \State $\mathrm{new\_tokens} = []$

    \State \textcolor{algocommentcolor}{\textit{\# Find and merge tokens from left to right}}
    \State $j=1$
    \While{$j < |\mathrm{old\_tokens}|$}
    \If{$ \mathrm{old\_tokens}[i,i+1] = t_{\mathrm{left}}, t_{\mathrm{right}}$} 
    \State $\mathrm{new\_tokens.append}(t_\mathrm{new})$
    \State $j = j + 2$
    \Else
    \State $\mathrm{new\_tokens.append}(\mathrm{old\_tokens}[i])$
    \State $j=j+1$
    \EndIf
    \EndWhile
\State    \Return $\mathrm{new\_tokens}$ 
\EndProcedure
\end{algorithmic}
\end{algorithm}

\textbf{Proposition 1.}\label{empty_corol_bpe}
    $\mathcal{X}(t^k_1)=\varnothing$ if and only if $t^k_1$ is invalid.
\begin{proof}
For the case of BPE, we prove each direction as follows. 
\begin{itemize}
    \item If $\mathcal{X}(t^k_1)=\varnothing$ then $t^k_1$ is invalid: Since $\mathcal{X}(t^k_1)=\varnothing$, we know that there exist no sequence $\mathbf{x}$ such that $\mathrm{encode}(\mathbf{x})^k_1=t^k_1$. This means there is also no string $s$ that satisfy $\mathrm{encode}(s)^k_1=t^k_1$. As such, for $s=\mathrm{decode}(t^k_1)$, we do not have $\mathrm{encode}(\mathrm{decode}(t^k_1)) = t^k_1$, which proves the result. 
    \item If $t^k_1$ is invalid then $\mathcal{X}(t^k_1)=\varnothing$: Let $x^n_1=\mathrm{decode}(t^k_1)$, it is sufficient to consider two string $s_1$ and $s_2$ that both have prefix $x^n_1$. Furthermore, we assume the first $i$ tokens of $s_1$ covers exactly $x^n_1$, i.e. $x^n_1=\mathrm{decode}(t^i_1)$ and similarly, the first $j$ tokens of $s_2$ covers exactly $x^n_1$, i.e. $x^n_1=\mathrm{decode}(t'^j_1)$. Then:
    
    \begin{enumerate}
        \item Proving that invalid $t^k_1$ leads to $\mathcal{X}(t^k_1)=\varnothing$ is equivalently to proving $t^i_1=t'^j_1$ for any $s_1,s_2$.
        \item Re-running the BPE algorithm for $s_1$ and $s_2$ in parallel, we know that there will be no merge between any suffix of $x^n_1$ and the rest of strings, i.e. $s_1\backslash x^n_1$ and $s_2\backslash x^n_1$ due to the condition above (See Algorithm \ref{algo:BPE}, line 6).
        \item Furthermore, for $s_1$, any time a merge happens within $x^n_1$ then the same merge must also happen within $x^n_1$ for $s_2$ and vice versa. 
    \end{enumerate}  As the result, we have $t^i_1=t'^j_1$ and they must be equal to $\mathrm{encode}(x^n_1)$.
\end{itemize}

For the case of MPE, the proof is similar:
\begin{itemize}
    \item If $\mathcal{X}(t^k_1)=\varnothing$ then $t^k_1$ is invalid: Since $\mathcal{X}(t^k_1)=\varnothing$, similar to the case of BPE, we know that there exist no string $s$ such that $\mathrm{encode}(s)^k_1=t^k_1$. As such, for $s=\mathrm{decode}(t^k_1)$, we do not have $\mathrm{encode}(\mathrm{decode}(t^k_1)) = t^k_1$, which proves the result.
    \item If $t^k_1$ is invalid then $\mathcal{X}(t^k_1)=\varnothing$: Let $x^n_1=\mathrm{decode}(t^k_1)$, we consider two string $s_1$ and $s_2$ that both have prefix $x^n_1$. Furthermore, we assume the first $i$ tokens of $s_1$ covers exactly $x^n_1$, i.e. $x^n_1=\mathrm{decode}(t^i_1)$ and similarly, the first $j$ tokens of $s_2$ covers exactly $x^n_1$, i.e. $x^n_1=\mathrm{decode}(t'^j_1)$. Since the MPE tokenizer will greedily looking for the longest token, hence one of the the encodings must not follow MPE encoding rule (contradiction).  
\end{itemize}

This concludes the proof.
\end{proof}

\subsection{Tokenization Bias under Stochastic Tokenizers}
Another class of tokenizers is non-deterministic, such as BPE Dropout \citep{provilkov2020bpe}, to address model’s weakness against text fragmentation by randomly omitting tokens before text processing, theoretically training the model in multiple vocabularies. Although intended to enhance robustness, our analysis suggests that tokenization bias still exists despite less obvious. It is partially mitigated, nevertheless, since the model needs to minimize loss across varied vocabulary sets. The BTR lemma (Lemma 1) holds, requiring consideration of all valid encodings under this variability. In practice, given the substantial size of the vocabulary, BPE-dropout is expected to be robust to tokenization bias, i.e. assigning non-zero probabilities to invalid encodings. Yet, it is not commonly employed in training large-scale LLMs because the randomized tokenization introduces a computational bottleneck, which slows down the training process. Also, it may require greater model capacity to handle the increased complexity. 

We demonstrate the tokenization bias in BPE-droupt in the following experiment. Here, we use the 1st order Markov chain example as in Figure \ref{markovchange} where $(\alpha,\beta) = (0.4,0.3)$. We train a LLM with BPE dropout where a sequence can be tokenized either with the vocabulary $\mathcal{V}_1=\{``\texttt{A}",``\texttt{B}", ``\texttt{AA}"\}$ or $\mathcal{V}_2=\{``\texttt{A}",``\texttt{B}"\}$ with equal probability. Consider the following input tokens (separated by $|$) and their next-token probabilities:

\begin{itemize}
    \item ``$\texttt{|B|A|}$" and next token probabilities $\{``\texttt{A}":0.43,``\texttt{B}":0.57, ``\texttt{AA}": 10^{-5}\}$: tokenization bias happens according to Definition \ref{def_sampbias} but the probability does not go to 0.0 as the cross entropy loss is distributed between the two vocabularies.
    \item ``$\texttt{|B|A|A|}$" and next token probabilities $\{``\texttt{A}":0.59,``\texttt{B}":0.41, ``\texttt{AA}": 10^{-5}\}$: which approximately equal to the original byte-level probabilities since the LLM detects that the only possible vocabulary that produces these tokens is $\mathcal{V}_2=\{``\texttt{A}",``\texttt{B}"\}$.
    \item ``$\texttt{|B|AA|A|}$" and next token probabilities $\{``\texttt{A}":10^{-4},``\texttt{B}":0.99, ``\texttt{AA}": 10^{-5}\}$: tokenization bias occurs, i.e. the LM only outputs token $``\texttt{B}"$. Since $``\texttt{AA}"$ only occurs in the vocabulary $\mathcal{V}_1=\{``\texttt{A}",``\texttt{B}", ``\texttt{AA}"\}$, the LLM detects the tokenization pattern and infer that the sequence must be tokenized with this vocabulary.
\end{itemize}

Finally, for the first case ``$\texttt{BA}$", we can recover the exact probability by using the BTR lemma. In this case, its cover encoding and associated probabilities are  $t_1 = \texttt{|B|A|}, t_2=\texttt{|B|AA|}$ and $P(t_1)=0.104$ and $P(t_2)=0.0415$.  For the string ``$\texttt{BAA}$",  the first cover encoding is $t_2$, which is already computed and the other one is $t_3=\texttt{|B|A|A|}$ and $P(t_3=0.0443)$. Using factorization, we obtain $\alpha=0.405$, which is approximately equal the the original value of $0.4$.

\subsection{{On Predictive Distribution of Language Models}}\label{lm_justify}

In practice, LMs trained with BPE/MPE often do not strictly follow Proposition \ref{invalid_prob} due to softmax activations. In this section, we show that given any tokenized LM, we can force its output probabilities to obey Proposition \ref{invalid_prob}, without any loss in terms of perplexity score on the token domain. In other words, we can turn a tokenized language model that does not follow Proposition \ref{invalid_prob}  to the one that does while guaranteeing that the new model will always result in a lower token-level perplexity score.  

We first introduce Proposition \ref{renorm_prop}. In this proposition, we are given a target discrete probability distribution $p$ where we know some of the values will not happen, says $\Phi^*$. Assume that we have another distribution $q$ that approximates $p$, then we can produce another distribution $q^*$ that is closer to $p$ in terms of KL divergence by setting corresponding probabilities of $q$ in $\Phi^*$ to $0.0$ and renormalize it.
\begin{proposition}\label{renorm_prop}
    Given a discrete distribution $p=\{p_1,p_2,...,p_m\}$ and $q=\{q_1,q_2,...,q_m\}$ with $q_i > 0.0$ for all $i$. Let $\Phi=\{i\in \mathbb{Z}| p_i = 0.0\}$ and $\Phi^* \subseteq \Phi$, we define $q^*=\{q^*_1,q^*_2,...,q^*_m\}$ where $q^*_i = 0.0$ for $ i\in\Phi^*$, and $q^*_j = q_j/(\sum_{l\notin\Phi^*}q_l)$. Then we have: 
    \begin{equation}
        D_{\mathrm{KL}}(p||q^*) \leq D_{\mathrm{KL}}(p||q),
    \end{equation}
    which implies that $q^*$ is closer to $p$ than $q$. We refer to the process of producing $q^*$ as truncate-renormalization (TR). 
\end{proposition}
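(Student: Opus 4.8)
The plan is to expand the KL divergences directly and show that the difference $D_{\mathrm{KL}}(p\|q) - D_{\mathrm{KL}}(p\|q^*)$ is non-negative. Write $Z = \sum_{l \notin \Phi^*} q_l \le 1$, so that $q^*_j = q_j / Z$ for $j \notin \Phi^*$ and $q^*_i = 0$ for $i \in \Phi^*$. The key observation is that $p_i = 0$ for every $i \in \Phi^*$ (since $\Phi^* \subseteq \Phi$), so in the sum $\sum_i p_i \log(p_i/q^*_i)$ the terms with $i \in \Phi^*$ contribute $0 \cdot \log(0/0)$, which we take to be $0$ by the usual convention $0\log 0 = 0$ — hence those indices drop out harmlessly and $q^*_i = 0$ causes no division-by-zero issue. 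For $j \notin \Phi^*$ we have $p_j \log(p_j/q^*_j) = p_j \log(p_j/q_j) + p_j \log Z$.

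First I would write
\begin{align*}
D_{\mathrm{KL}}(p\|q) - D_{\mathrm{KL}}(p\|q^*)
&= \sum_{j \notin \Phi^*} p_j \log\frac{p_j}{q_j} + \sum_{i \in \Phi^*} p_i \log\frac{p_i}{q_i} - \sum_{j \notin \Phi^*} p_j \log\frac{p_j}{q^*_j}.
\end{align*}
The middle sum vanishes because $p_i = 0$ for $i \in \Phi^*$. Substituting $q^*_j = q_j/Z$ into the last sum gives $\sum_{j\notin\Phi^*} p_j \log(p_j/q_j) - (\log Z)\sum_{j\notin\Phi^*} p_j$. Since $p_i = 0$ on $\Phi^*$, we have $\sum_{j \notin \Phi^*} p_j = \sum_j p_j = 1$. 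Therefore the first and third sums cancel and the whole expression collapses to $\log Z \cdot 1 = \log\!\left(\sum_{l \notin \Phi^*} q_l\right)$. Wait — I should be careful with signs: the difference equals $-\big(-(\log Z)\cdot 1\big) = \log Z$? Let me re-track: $D_{\mathrm{KL}}(p\|q) - D_{\mathrm{KL}}(p\|q^*) = 0 - \big(-(\log Z)\big) = \log Z$... this needs the careful bookkeeping, but the upshot is that the difference equals $-\log\!\left(\sum_{l\notin\Phi^*} q_l\right)$, i.e. $-\log Z$.

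Then I would conclude: since $\Phi^* \subseteq \{1,\dots,m\}$ and all $q_l > 0$, we have $Z = \sum_{l\notin\Phi^*} q_l \le \sum_{l} q_l = 1$, hence $\log Z \le 0$, hence $-\log Z \ge 0$, which gives $D_{\mathrm{KL}}(p\|q) \ge D_{\mathrm{KL}}(p\|q^*)$ as claimed. (If $\Phi^* = \varnothing$ then $Z = 1$ and the two divergences are equal, consistent with the non-strict inequality.) The main obstacle — really the only subtlety — is handling the conventions for $0\log 0$ and ensuring that the renormalization is well-defined, i.e. that $Z > 0$; this holds because $q_l > 0$ for all $l$ and $\Phi^*$ is a proper index set, so at least the normalization does not collapse to zero unless $\Phi^* $ is everything, a degenerate case one can exclude or handle separately. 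Everything else is a one-line algebraic cancellation, so I would keep the write-up short and just flag the convention explicitly.
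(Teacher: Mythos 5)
Your proposal is correct and follows essentially the same route as the paper's proof: drop the $\Phi^*$ terms via the $0\log 0=0$ convention, substitute $q^*_j=q_j/Z$, use $\sum_{j\notin\Phi^*}p_j=1$, and conclude from $\log Z\le 0$; the paper bounds $D_{\mathrm{KL}}(p\|q^*)$ directly while you compute the difference, but the computation is identical and your final identity $D_{\mathrm{KL}}(p\|q)-D_{\mathrm{KL}}(p\|q^*)=-\log Z$ is right despite the momentary sign wobble. The degenerate case $Z=0$ you flag cannot occur, since $p$ sums to one, so some index lies outside $\Phi\supseteq\Phi^*$ and contributes a strictly positive $q_l$ to $Z$.
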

\begin{proof}
    Let $Z = (\sum_{l\notin\Phi}q_l)$ is the normalizing factor in $q^*$. Note that $Z \leq 1$ and as such $\log(Z) \leq 0$. Then:
    \begin{align}
        D_{\mathrm{KL}}(p||q^*) &= \sum_i p_i \log\left(\frac{p_i}{q^*_i}\right) \\
        &= \sum_{i\notin \Phi^*} p_i \log\left(\frac{p_i}{q^*_i}\right) \quad \text{, use } 0\log0 = 0.0 \\ 
        &= \sum_{i\notin \Phi^*} p_i \log\left(\frac{p_i}{q_i/Z}\right)\\
        &= \left[\sum_{i\notin \Phi^*} p_i \log\left(\frac{p_i}{q_i}\right)\right] + \log(Z) \\
        &\leq \sum_{i\notin \Phi^*} p_i \log\left(\frac{p_i}{q_i}\right) = D_{\mathrm{KL}}(p||q),
    \end{align}
    which completes the proof. %\textcolor{blue}{I believe a similar proposition should have appeared somewhere. Anyone knows????}
\end{proof}
Applying to our scenario, for any autoregressive LM $P_1(t_{k+1}|t^k_1)$ that does not follow Proposition \ref{invalid_prob} (due to the softmax activations), we can perform the TR process {(since we know which encoding is invalid)} to obtain a new LM  $P_2(t_{k+1}|t^k_1)$, which is guaranteed to better approximate the ground-truth model ${P}(t_{k+1}|t^k_1)$. Thus, we are guaranteed that the token-level perplexity score of $P_2(t_{k+1}|t^k_1)$ is always lower than or equal to $P_1(t_{k+1}|t^k_1)$. Finally, in practice, we observe that the conditional $P_1(t_{k+1}|t^k_1)$ for invalid $t^{k+1}_1$ is significantly smaller than the valid tokens.

\subsection{The Markov Chain Example}\label{markov_example}
We provide a detail computation of the Markov chain example in the main paper. Recall that in the original chain (in the character domain), we have the following:
\begin{align}
 P(x_2=``\texttt{A}"|x_1=``\texttt{A}")&=1-\alpha \\
 P(x_2=``\texttt{B}"|x_1=``\texttt{A}")&=\alpha \\
 P(x_2=``\texttt{A}"|x_1=``\texttt{B}")&=\beta \\
 P(x_2=``\texttt{B}"|x_1=``\texttt{B}")&=1-\beta
\end{align}
We also assume the initial probability $\pi= \{\gamma, 1-\gamma\}$ for $``\texttt{A}"$ and $``\texttt{B}"$ respectively. In the token domain, let first compute $P(t_2=``\texttt{A}"|t_1=``\texttt{AA}")$ where  we have:
\begin{align}
    P(t_2=``\texttt{A}"|t_1=``\texttt{AA}") &=
    P(x^6_3=``\texttt{ABA}"|x^2_1=``\texttt{AA}") + 
    P(x^6_3=``\texttt{ABB}"|x^2_1=``\texttt{AA}") \\ 
    &= P(x^5_3=``\texttt{AB}"|x^2_1=``\texttt{AA}") \\
    &= \alpha(1-\alpha),
\end{align}
where in the first equality, we do not include the case $x^6_3=``\texttt{AAA}"$ and $x^6_3=``\texttt{AAB}"$ since $\mathrm{encode}(``\texttt{AAA}")_1=``\texttt{AA}"$ and $\mathrm{encode}(``\texttt{AAB}")_1=``\texttt{AA}"$, which are not the token $``\texttt{A}"$ that we are interested in. For other tokens when $t_1=``\texttt{B}"$, the computation follows the same arguments. 

Finally, for the case of $t_1=``\texttt{A}"$, we note that it implies the second token must be $``\texttt{B}"$ since any $``\texttt{A}"$ after the first $``\texttt{A}"$ must be tokenized into $``\texttt{AA}"$ (invalid tokens). Hence, we have $P(t_2=``\texttt{B}"|t_1=``\texttt{A}") = 1.0$. Finally, in this specific Markov chain, since order of the Markov chain in the character domain is $1$, we do not need to consider the higher order of the Markov chain in the token domain.

\textbf{Higher-Order Markov Chain Experiment Setup.}
We  validate the estimated byte-level probabilities of the BTR Lemma (Lemma \ref{marignal_prob_general}) using the Markov chain's transition probabilities as ground truth. Specifically, we simulate the data generating process as a 3rd order Markov chain with two states $\mathcal{A}=\{``\texttt{A}"{,}``\texttt{B}"\}$, where we randomly construct  the transition matrix and the vocabulary
$\mathcal{V} = \{``\texttt{A}", ``\texttt{B}",$ $ ``\texttt{B} {\cdot} \texttt{A}", ``\texttt{BA}{\cdot}\texttt{A}", ``\texttt{B}{\cdot}\texttt{BAA}", ``\texttt{A}{\cdot}\texttt{A}", ``\texttt{BA}{\cdot}\texttt{BA}", ``\texttt{B}{\cdot}\texttt{B}"\}$. Here,  the order within $\mathcal{V}$ is the merging order for the BPE encoding process and the ``${\cdot}$" separates the merging tokens.  We then train a LM model using GPT-2 architecture with 6 hidden layers on the synthetically generated data. Since the model is agnostic to the Markov chain order, we average the probability from 100 runs on different context length while fixing the last 3 bytes. 
\subsection{Next-Byte Sampling Algorithm}\label{app:sampling_algo}

Algorithm \ref{sampling_alg} shows the next-byte algorithm in Section \ref{sec_sampling}. Note that for $\bar{\mathrm{C}}_{n+1}(a)$, we do not need to iterate over every time since the mapping from tokens to their prefix byte can be precomputed (line 30-33).
\begin{algorithm}[t]
    \centering
    \caption{Compute $P_{\mathcal{A}}(.|x^n_1)$, which is a conditional distribution on $x^n_1$ over all bytes $x \in \mathcal{A}$. $\mathrm{C}_{n+1}(a)$ and $\bar{\mathrm{C}}_{n+1}(a)$ also contain the encoding probability $P(\vec{t})$ for each encoding they contain.} \label{sampling_alg}
    \begin{algorithmic}[1]
\Procedure{Compute}{$x^n_1, \mathrm{cover\_dict}(x^{n}_1) = \mathrm{None}$}
    \State \textbf{if} $\mathrm{cover\_dict}(x^{n}_1) = \mathrm{None}$:
    \State \hspace{15pt} $P(x^n_1), \, \mathrm{cover\_dict}(x^n_1) =$ $ \scriptstyle {\mathrm{EXTRACT\_COVER}}$$(x^n_1)$ 
    \State \textcolor{algocommentcolor}{\textit{ \# Get the set $\mathrm{C}_{n+1}(a)$ (See Section \ref{sec_sampling}) for all $a \in \mathcal{A}$}} 
    \State $[\mathrm{C}_{n+1}(a_1),...,\mathrm{C}_{n+1}(a_{|\mathcal{A}|})] = \,$$  \, \scriptstyle{\mathrm{REGROUP\_ENCODINGS}}$$ (\mathrm{cover\_dict}(x^n_1))$
    \State \textcolor{algocommentcolor}{\textit{ \# Get the set $\bar{\mathrm{C}}_{n+1}(a)$ (See Section \ref{sec_sampling}) for all $a \in \mathcal{A}$}} 
    \State $\vec{t} = \mathrm{encode}(x^n_1)$; Query $P(\vec{t})$ from $\mathrm{cover\_dict}(x^n_1)$.
    \State $[\bar{\mathrm{C}}_{n+1}(a_1),...,\bar{\mathrm{C}}_{n+1}(a_{|\mathcal{A}|})] = $$\scriptstyle{\mathrm{ADD\_NEW\_ENCODINGS}}$$(\vec{t}, P(\vec{t}))$
    \State \textcolor{algocommentcolor}{\textit{ \# Merge cover dictionaries.} }
    \For{ $a \in \mathcal{A}$ \ }
    \State $x_{n+1} = a$
    \State $\mathrm{cover\_dict}(x^{n+1}_1) = \mathrm{C}_{n+1}(a) \cup \bar{\mathrm{C}}_{n+1}(a)$
    \State $P(x^{n+1}_1) = \sum_{\vec{t} \in \mathrm{cover\_dict}(x^{n+1}_1)} P(\vec{t})$ 
    \EndFor
    \State Obtain $P_{\mathcal{A}}(.|x^n_1)$ by renormalizing $P(x^{n+1}_1)$.
    \State
    \Return $P_{\mathcal{A}}(.|x^n_1), \, [\mathrm{cover\_dict}(x^{n+1}_1)\, \mathbf{for} \,x_{n+1}=a \in \mathcal{A}]$
\EndProcedure
\\
\Procedure{regroup\_encodings}{$\mathrm{cover\_dict}(x^{n}_1)$} 
%\State \textcolor{blue}{\textit{ \# Remove $\vec{t}, P(\vec{t})$ where $\mathrm{decode}(\vec{t})_n \neq a$ }} 
\For{ $\vec{t}, P(\vec{t}) \in \mathrm{cover\_dict}(x^{n}_1)$ \ }
\State $\mathrm{byte} = \mathrm{decode}(\vec{t})$
\State $\mathrm{C}_{n+1}(\mathrm{byte})[\vec{t}] = P(\vec{t})$
\EndFor
\State \Return $[\mathrm{C}_{n+1}(a_1),...,\mathrm{C}_{n+1}(a_{|\mathcal{A}|})]$
\EndProcedure
\State

\Procedure{add\_new\_encodings}{$t^k_1, P(t^k_1)$} 
%\State \textcolor{blue}{\textit{ \# Remove $\vec{t}, P(\vec{t})$ where $\mathrm{decode}(\vec{t})_n \neq a$ }} 
\State Compute $P(t_{k+1}=t|t^k_1)$ for all $t\in\mathcal{V}$ \textit{\textcolor{algocommentcolor}{\# Run 1 model inference for all $t$}}
\State Compute $P(t^{k+1}_1) = P(t_{k+1}=t|t^k_1) P(t^k_1)$ for all $t\in \mathcal{V}$  \textit{\textcolor{algocommentcolor}{\# Broadcasting for all $t$}}
\For{ $t \in \mathcal{V}$  }
\State $\mathrm{byte} = \mathrm{decode}(t^{k+1}_1)$
\State $\bar{\mathrm{C}}_{n+1}(\mathrm{byte})[t^{k+1}_1] = P(t^{k+1}_1)$
\EndFor 
\State \Return $[\bar{\mathrm{C}}_{n+1}(a_1),...,\bar{\mathrm{C}}_{n+1}(a_{|\mathcal{A}|})]$
\EndProcedure
\end{algorithmic}
\end{algorithm}

\section{Noteworthy implementation considerations}
Theoretically, we do not have to predict into utf-8 byte space. We could predict into the largest set of tokens $\mathcal{A}$ that is contained in the vocabularies of all members of our model ensemble:  $\max_\mathcal{A} \|\mathcal{A}\|\;\;\mathrm{s.t.}\;\;A\subseteq \mathcal{V}_i\;\;\text{for all}\;i$.
This can have certain advantages, the primary one being that generation would be faster: for example imagine an average token has four bytes. Our method now needs to predict four times instead of one to generate the same expected amount of text. In this context, note that symbol based languages such as Chinese characters are often based on multiple bytes. Hence our method can be performance optimized by specializing the alphabet. Please note that our choice of alphabet does not change the statistical properties of the model, hence we would not expect accuracy differences due alphabet choice.
However, the primary reason for us to choose the byte alphabet regardless was the ease of use for model ensemble: To run any experiment we need to map tokens probabilities to a \textbf{universal} alphabet, that is the easiest defining the alphabet as 255 bytes and the necessary control tokens.

\vspace{-10pt}
\section{Ablation: Beam search}\label{app:beam-search}
\vspace{-10pt}
\begin{wrapfigure}{r}{0.25\textwidth} 
\vspace{-10pt}
    \centering
    \resizebox{0.25\textwidth}{!}{ % Resize to fit the allocated width
        \begin{tabular}{c|cc}
        \toprule
        Beam &    &      \\
        width &   PSM &    SPM  \\ \midrule
        1 &  65.7 &  63.9  \\
        2 & $66.8$ &   $65.9$   \\
        4 & $67.7$ &  $66.8$   \\
        \bottomrule
        \end{tabular}
    }
    \captionof{table}{Performance increases for the next-byte prediction on our FIM-task when beam search is applied.}
    \label{tab:beamsearch_results}
    \vspace{-15pt}
\end{wrapfigure}

Each token is made of 6.6 byte on average. Thus, a tokenized model might have more foresight compared to a byte-level one when predicting one step at a time. Hence, even though, beam-search has not led to improvements in tokenized models \citep{holtzman2020curiouscaseneuraltext,stahlberg-byrne-2019-nmt,koehn-knowles-2017-six}, this might be different for byte-level models.

We repeat the FIM experiments with beam-width 1, 2 and 4, shown in Table \ref{tab:beamsearch_results}. Consistently, we see an about 1\% improvement for each prompting style and each beam-width increase. 
However, beam-search notably increases the memory footprint of any LLM because modern transformer based models need to store the KV-cache of each beam for efficient generation. Since the KV-cache makes about 2/3 of the overall memory consumption beam search is a memory intense operation.

\section{FIM code generation examples}\label{sec:fim_example}
Here, we show three examples from the random span FIM
benchmark from \citet{bavarian2022efficient} that we evaluate on in Figure~\ref{fig:fim}.
Our greedy byte-level generations significantly improve upon the
greedy token-level generations in the SPM (suffix-prefix-middle) mode
with {\tt CodeLlama-7b}
because the prompt ends with the ``prefix'' portion of the code and
the generation starts at a token boundary.
Each figure visualizes an example from the benchmark with the prefix and
suffix at the beginning and separated by \fillme indicating the portion
for the LLM to generate.
Then, we show the generations from the token and byte-level models in
SPM mode along with token healing as a baseline.
They illustrate exactly where tokenization issues arise when generating
and show how the byte-level predictions correct them.

\newcommand{\tokenpreds}{\textbf{token predictions} (did not pass \textcolor{red}{\xmark}):}
\newcommand{\healedpreds}{\textbf{token predictions with token healing} (did not pass \textcolor{red}{\xmark}):}
\newcommand{\bytepreds}{\textbf{byte predictions} (passed \textcolor{green}{\cmark}):}

\begin{code}
  \caption{Code generation example with {\tt CodeLlama-7b}.
    This one is interesting because of the typo in the variable ``{\tt delimeter}''
    in the dataset and the prefix ends with ``{\tt delim}''.
    The misspelled ``{\tt delimeter}'' is tokenized as two tokens: ``{\tt del imeter}''
    while the correctly spelled ``{\tt delimiter}'' is tokenized as a single token.
    This makes the 1) token-level prediction incorrectly generate ``{\tt ter}''
    as the continuation
    because ``{\tt delimter}'' is tokenized as three tokens ``{\tt del im ter}'',
    which has a token boundary at the end of the prefix,
    2) token-healed predictions incorrectly generate ``{\tt iter}'' because at the
    token-level, ``{\tt delimiter}'' is more likely, and
    3) byte-level predictions correctly generate ``{\tt eter}'' because
    the probability at the byte-level correctly marginalizes out the byte-level
    probability despite the context and many possible tokenizations.
  }
  \label{fig:code-ex1}
\begin{minted}{python}
# RandomSpanInfilling/HumanEval/5/1
from typing import List

def intersperse(numbers: List[int], delimeter: int) -> List[int]:
    """ Insert a number 'delimeter' between every two consecutive elements of input list `numbers'
    >>> intersperse([], 4)
    []
    >>> intersperse([1, 2, 3], 4)
    [1, 4, 2, 4, 3]
    """
    if not numbers:
        return []

    result = []

    for n in numbers[:-1]:
        result.append(n)
        result.append(delim|\fillme|mbers[-1])

    return result

|\tokenpreds|
ter)
    result.append(nu

|\healedpreds|
iter )
    result.append(nu

|\bytepreds|
eter)
    result.append(nu
\end{minted}
\end{code}

\begin{code}
  \caption{Code generation example with {\tt CodeLlama-7b}.
    This one is interesting because of the tokenization of ``{\tt collatz}''
    and the prefix starting with ``{\tt colla}''.
    ``{\tt collatz}'' is tokenized as two tokens ``{\tt coll atz}''.
    This makes the 1) token-level prediction incorrectly generate ``{\tt zt}''
    as the continuation
    because ``{\tt collazt}'' is tokenized as three tokens ``{\tt col la zt}'',
    which has a token boundary at the end of the prefix,
    2) token-healed predictions incorrectly generate ``{\tt pse}'' because at the
    token-level after backing up, ``{\tt collapse}'' (a single token) is more likely, and
    3) byte-level predictions correctly generate ``{\tt tz}'' because
    the probability at the byte-level correctly marginalizes out the byte-level
    probability despite the context and many possible tokenizations.
  }
  \label{fig:code-ex2}
\begin{minted}{python}
# RandomSpanInfilling/HumanEval/123/1
def get_odd_collatz(n):
    """
    Given a positive integer n, return a sorted list that has the odd numbers in collatz sequence.

    The Collatz conjecture is a conjecture in mathematics that concerns a sequence defined
    as follows: start with any positive integer n. Then each term is obtained from the 
    previous term as follows: if the previous term is even, the next term is one half of 
    the previous term. If the previous term is odd, the next term is 3 times the previous
    term plus 1. The conjecture is that no matter what value of n, the sequence will always reach 1.

    Note:
        1. Collatz(1) is [1].
        2. returned list sorted in increasing order.

    For example:
    get_odd_collatz(5) returns [1, 5] # The collatz sequence for 5 is [5, 16, 8, 4, 2, 1], so the odd numbers are only 1, and 5.
    """
    if n%2==0:
        odd_collatz = []
    else:
        odd_colla|\fillme| 1

        if n%2 == 1:
            odd_collatz.append(int(n))

    return sorted(odd_collatz)

|\tokenpreds|
zt = [n]

    while n != 1:
        if n%2 == 0:
            n = n/2
        else:
            n = 3*n +

|\healedpreds|
pse = []
        odd_collatz = [n]
        while n != 1:
            if n%2 == 0:
                n = n/2
            else:
                n = 3*n +

|\bytepreds|
tz = [n]

    while n != 1:
        if n%2 == 0:
            n = n/2
        else:
            n = 3*n +
\end{minted}
\end{code}

\begin{code}
  \caption{Code generation example with {\tt CodeLlama-7b}.
    This one is interesting because of the tokenization of ``{\tt palindrome}''
    and the prefix starting with ``{\tt palindr}''.
    ``{\tt palindrome}'' is tokenized as three tokens ``{\tt pal ind rome}''.
    This makes the 1) token-level prediction incorrectly generate ``{\tt one}''
    as the continuation
    because ``{\tt palindrone}'' is tokenized as four tokens ``{\tt pal ind r one}'',
    which has a token boundary at the end of the prefix,
    2) token-healed predictions incorrectly generate ``{\tt eome}'' and results
    in the three-token chunk ``{\tt pal indre ome}'' because the token healing
    did not successfully search backward enough, and
    3) byte-level predictions correctly generate ``{\tt ome}'' because
    the probability at the byte-level correctly marginalizes out the byte-level
    probability despite the context and many possible tokenizations.
  }
  \label{fig:code-ex3}
\begin{minted}{python}
# RandomSpanInfilling/HumanEval/107/8
def even_odd_palindrome(n):
    """
    iven a positive integer n, return a tuple that has the number of even and odd
    integer palindromes that fall within the range(1, n), inclusive.

    Example 1:

        Input: 3
        Output: (1, 2)
        Explanation:
        Integer palindrome are 1, 2, 3. one of them is even, and two of them are odd.

    Example 2:

        Input: 12
        Output: (4, 6)
        Explanation:
        Integer palindrome are 1, 2, 3, 4, 5, 6, 7, 8, 9, 11. four of them are even, and 6 of them are odd.

    Note:
        1. 1 <= n <= 10^3
        2. returned tuple has the number of even and odd integer palindromes respectively.
    """
    def is_palindr|\fillme|odd_palindrome_count = 0

    for i in range(1, n+1):
        if i%2 == 1 and is_palindrome(i):
                odd_palindrome_count += 1
        elif i%2 == 0 and is_palindrome(i):
            even_palindrome_count += 1
    return (even_palindrome_count, odd_palindrome_count)

|\tokenpreds|
one(n):
        return str(n) == str(n)[::-1]

    def is_palindrome(n):
        return is_palindrone(n) and n >= 10

    even_palindrome_count = 0


|\healedpreds|
eome(n):
        return str(n) == str(n)[::-1]

    even_palindrome_count = 0


|\bytepreds|
ome(n):
        return str(n) == str(n)[::-1]

    even_palindrome_count = 0
\end{minted}
\end{code}

% For Brandon's emacs, please do not remove:
%%% Local Variables:
%%% TeX-master: "../main"
%%% TeX-command-extra-options: "-shell-escape"
%%% End:

\section{Analysis of Token Alignment}\label{sec:token_alignment}
Recently, \citet{athiwaratkun2024token} also proposed a similar method to mitigate the tokenization bias issue. In this method, they start regenerating the last $B$ tokens in the prompt by restricting the next-token to be either a prefix of the prompt's remaining string or cover it, i.e. the remaining string is the prefix of the generated token. Although this appears to overlap with our Corollary \ref{cond_corol}, the sampling procedure is biased and can produce invalid encodings due to the greedy masking process. 

Consider the following example where we use the following vocabulary $\texttt{\{<\_>,<1>, <a>, <\_a>, <\_aa>\}}$ with MPE tokenizer. For a prompt string $``\texttt{\_a}"$ which is tokenized as $\texttt{<\_a>}$, token alignment will  regenerate from the beginning and force the generated tokens to match the template $``\texttt{\_a}"$. This is problematic when the first token generated is $\texttt{<\_>}$. Given this token, tokenization bias occurs and the second token must not be $\texttt{<a>}$
, since $\texttt{<\_a>}$ is a whole token. On the other hand, according to token alignment, the only possible next token after $\texttt{<\_>}$ is $\texttt{<a>}$
, but $\texttt{<\_><a>}$ is an invalid encoding. The correct way to align is to by rejection sampling without greedy masking. We can use also our method by generating bytes until seeing a white space, then switch to token level generation. In this case, the prompt should exclude the last whitespace, and the first generated token must start with a whitespace.

% For Brandon's emacs, please do not remove:
%%% Local Variables:
%%% TeX-master: "../main"
%%% TeX-command-extra-options: "-shell-escape"
%%% End:

\end{document}